\def\BibTeX{{\rm B\kern-.05em{\sc i\kern-.025em b}\kern-.08em T\kern-.1667em\lower.7ex\hbox{E}\kern-.125emX}}
\newtheorem{theorem}{Theorem}
\newtheorem{corollary}{Corollary}
\newtheorem{lemma}{Lemma}
\newtheorem{fact}{Fact}
\newcommand{\ignore}[1]{}  % Ignores sections of text
\begin{document}

%\title{Using the Style File IEEEtran.sty}
\title{\LARGE \bf Path Planning Algorithms for a Car-Like Robot visiting a set of Waypoints with Field of View Constraints}

\author{Sivakumar Rathinam$^{1}$, Satyanarayana Gupta Manyam$^{2}$, Yuntao Zhang$^3$
\thanks{$^{1}$Associate Professor, Mechanical Engineering, Texas A \& M University, College Station, TX-77843, }%
\thanks{$^{2}$Research Scientist, Infoscitex Corp., a DCS Company, Dayton, OH-45431,}%
\thanks{$^{3}$Graduate Student, Mechanical Engineering, Texas A \& M University, College Station, TX-77843. }%
}
\markboth{}
%{Murray and Balemi: Using the style file IEEEtran.sty} %!PN
{Murray and Balemi: Using the Document Class IEEEtran.cls} %!PN

%\IEEEpeerreviewmaketitle

\maketitle
\thispagestyle{empty}
\pagestyle{empty}

\begin{abstract}
This article considers two variants of a shortest path problem for a car-like robot visiting a set of waypoints. The sequence of waypoints to be visited is specified in the first variant while the robot is allowed to visit the waypoints in any sequence in the second variant. Field of view constraints are also placed when the robot arrives at a waypoint, $i.e.$, the orientation of the robot at any waypoint is restricted to belong to a given interval of angles at the waypoint. The shortest path problem is first solved for two waypoints with the field of view constraints using Pontryagin's minimum principle. Using the results for the two point problem, tight lower and upper bounds on the length of the shortest path are developed for visiting $n$ points by relaxing the requirement that the arrival angle must be equal to the departure angle of the robot at each waypoint. Theoretical bounds are also provided on the length of the feasible solutions obtained by the proposed algorithm. Simulation results verify the performance of the bounds for instances with 20 waypoints.
\end{abstract}

\section{Introduction}

Given a set of waypoints to visit and a car-like robot, this article considers two variants of a path planning problem where each waypoint must be visited by the robot and the length of the path traveled by the robot is minimized. Field of view constraints are also placed when the robot arrives at a waypoint, $i.e.$, the orientation of the robot at any waypoint is restricted to belong to a given interval of angles at the waypoint. In the first variant of the problem (Fig. \ref{fig:samplepath}), the sequence in which the waypoints must be visited is specified while the second variant allows for the waypoints to be visited in any sequence. The car-like robot considered here is the Reeds-Shepp vehicle, $i.e.$, a wheeled vehicle that can move forwards and backwards at a constant velocity with a lower bound on its turning radius. 

\begin{figure}[tb!]
\centering{}
\includegraphics[width=\columnwidth]{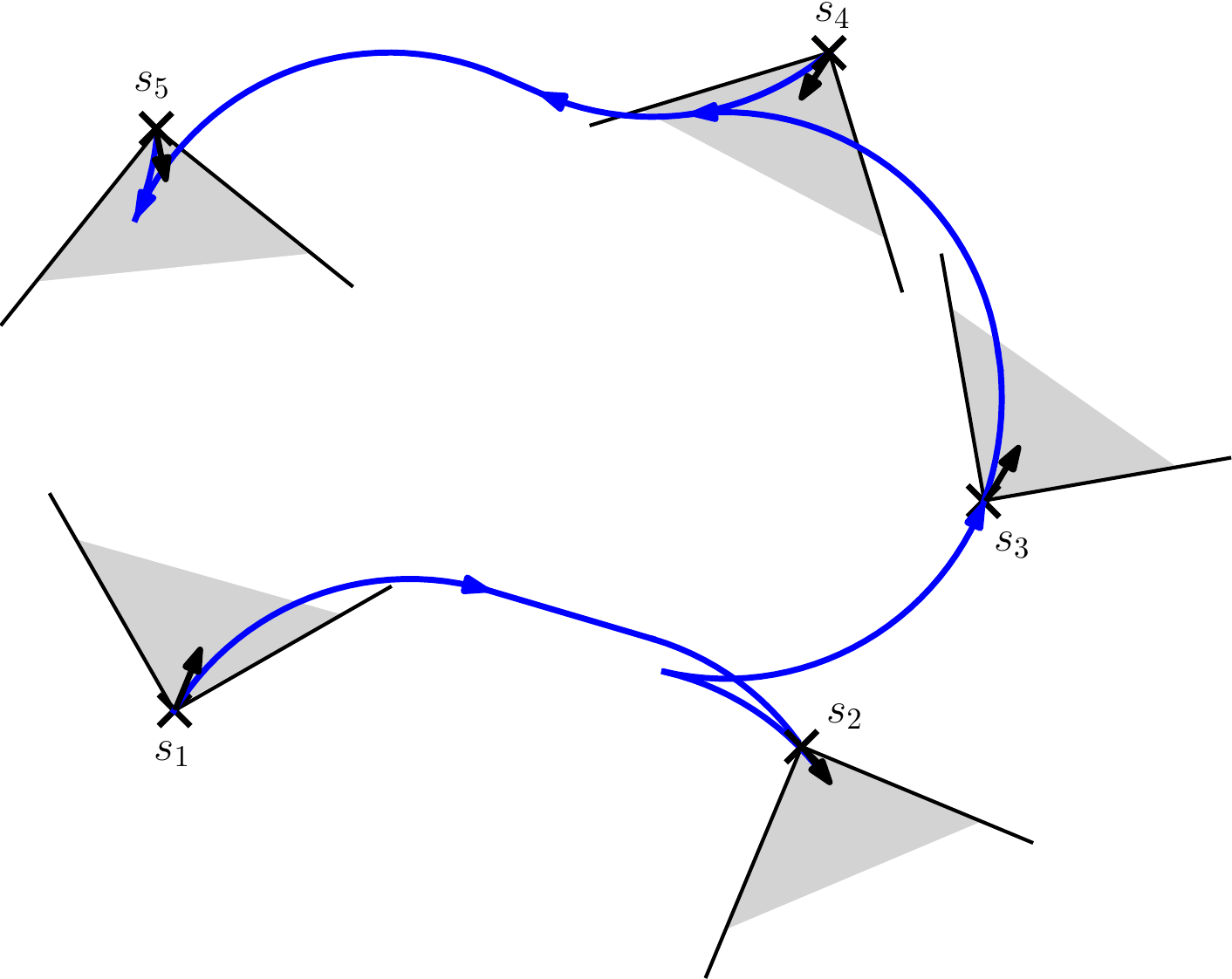}
\caption{A feasible Reeds-Shepp path for an instance with 5 waypoints. The waypoints are visited in the sequence $(s_1,s_2,s_3,s_4,s_5)$. The orientation of the vehicle at any waypoint must belong to the shaded interval of angles specified at the waypoint.}
\label{fig:samplepath}
\end{figure}

This path planning problem is a generalization of the classic point to point, shortest path problem considered by Reeds and Shepp in \cite{reeds1990}. They showed that the shortest path between two oriented points\footnote{A oriented point is a point with a heading angle also specified.} on a 2D plane belongs to a family of 48 paths where each path is a concatenation of at most 5 pieces, each of which is a straight line or an arc of a circle. This shortest path was formulated as an optimal control problem and solved using Pontryagin's minimum principle in Boissonat {\it et al}. \cite{boissonat}, and Sussmann and Tang \cite{Sussmann91shortestpaths}. Sussmann and Tang \cite{Sussmann91shortestpaths} use the theory of envelopes to further reduce the number of paths in the family to 46. Finally, Soueres and Laumond \cite{laumond96} provide a complete synthesis of the shortest paths and add restrictions on the validity of each of the paths in the family. 

The Reeds-Shepp model is closed related to the Dubins model \cite{Dubins1957} where the vehicle is only allowed to move forwards at a constant velocity with a lower bound on its turning radius. The shortest path between two oriented points for the Dubins vehicle belongs to a family of 6 paths where each path is a concatenation of at most 3 pieces, each of which is a straight line or an arc of a circle \cite{Dubins1957,boissonat,Sussmann91shortestpaths}. 

The application of optimal control theory for finding the shortest path for the Reeds-Shepp and Dubins model has lead researchers to obtain shortest path results for several other mobile robots, $i.e.$, refer to the differential drive models in Balkcom and Mason \cite{Balkcom-Mason}, Chitsaz {\it et al.} \cite{chitsaz2006}, the 3D Dubins model in Chitsaz and LaValle \cite{chitsaz-lavalle}, the bounded velocity models in Balkcom and Mason \cite{Balkcom2002}, Furtuna {\it et al.} \cite{furtuna}, the direction dependent Dubins model in Dolinskaya and Maggiar \cite{Dolinkaya}. 

This article is motivated by the generalization of the point to point shortest path problem to include more waypoints and field of view constraints. Note that the angle of visit is not specified at any of the waypoints in both the variants of the path planning problem. As the length of any path between two waypoints in the family of shortest paths is a non-linear function of the angle of visit at the respective waypoints, the path planning problem is non-trivial. In addition, the sequence of waypoints to visit is not specified in the second variant of the path planning problem which already makes it NP-Hard \cite{Applegate} even without the motion constraints, $i.e.$, Euclidean Traveling Salesman Problem (TSP) is its special case. 

The objective of this work is to develop algorithms that can provide solutions to the path planning problem with {\it a-priori} and {\it a-posteriori} guarantees. A-priori guarantees provide a theoretical upper bound on the length of the solutions with respect to the optimum for any instance of the problem. A-posteriori guarantees are obtained by implementing the algorithms on specific instances of the path planning problem and quantifying the deviation in the length of the paths obtained from the optimum.  A-priori guarantees can be numerically much worse compared to the a-posteriori guarantees as they are worst-case bounds valid for {\it any} instance of the problem\cite{TSPbook}.

For the Reeds-Shepp vehicle visiting a set of points with field of view constraints, we are not aware of any algorithms with approximation guarantees. However, in the case of a Dubins vehicle when the sequence of waypoints is specified, Lee et al. \cite{LeeDubins} provide an approximation algorithm with a guarantee\footnote{The guarantee provided by the algorithm here refers to the upper bound on the ratio of the length of the solution produced by the algorithm to the optimum for any instance of the problem.} of 5.03. This result can be further improved to $2+\frac{2}{\pi} + \frac{\pi}{2}\approx 4.21$ using the results by Goaoc {\it et al.} \cite{Goaoc}. 

Bounding the length of a feasible solution obtained by any algorithm with respect to the optimum requires one to either know how to find the optimum or know how to obtain a tight lower bound to the optimum. As we currently do not know how to find an optimal solution for both the variants of the path planning problem, we rely on finding tight lower bounds. In this article, we use the relaxation procedure recently developed in \cite{Manyam2018} for a Dubins vehicle to obtain lower bounds for the Reeds-Shepp vehicle. It is important to note that tightly bounding the length of a feasible solution with respect to a lower bound may still not be trivial for a mobile robot in general. However, since the Reeds-Shepp vehicle is small-time controllable everywhere\footnote{Given any time $t>0$, if the vehicle starts from the origin at time $t=0$, the vehicle can always reach a $\epsilon$-neighborhood of points from the origin with $\epsilon>0$.}\cite{Laumond1998}, we are able to convert lower bounding solutions to feasible solutions with guarantees. 

\subsection{Contributions:} 

\begin{itemize}
\item We first solve the Reeds-Shepp problem for two waypoints with field of view constraints. This problem is referred to as the Reeds-Shepp interval problem and is formulated as an optimal control problem in section \ref{sec:RSI}. A sufficient family of solutions to solve this problem is provided in section \ref{sec:RSfamily} using Pontryagin's minimum principle.
\item For visiting a set of $n$ waypoints, we first use a relaxation procedure to find a tight lower bound to the optimal length of the path planning problem (section \ref{sec:Approx}). In this procedure, we relax the constraint that the arrival angle and the departure angle of the robot at any waypoint must be the same, but restrict the absolute difference between the arrival and departure angles to be less than a given size.
\item If the sequence of waypoints is specified, the lower bounding problem requires finding a shortest path on a directed, acyclic graph. If the sequence of waypoints is not specified, the lower bounding problem requires solving a one-in-a set TSP. The cost of each edge in the graph in both the variants is obtained by solving the Reeds-Shepp interval problem. The solution to the lower bounding problem doesn't yet provide a feasible solution to the path planning problem; therefore, a simple heuristic is used to convert a lower bounding solution to a feasible solution for the path planning problem. 
\item We then provide theoretical bounds on the length of the solutions obtained by the proposed algorithm in section \ref{sec:Approx}.
\item Simulation results verify the guarantees provided by the algorithms for both the variants in section \ref{sec:sim}.
\end{itemize}

\section{Review of Reeds-Shepp's shortest path result}

We use the notation followed in Sussmann and Tang\cite{Sussmann91shortestpaths}, and in Soueres and Laumond \cite{laumond96} to present the sufficient family of paths for the Reeds-Shepp vehicle: $S$ denotes a straight line segment and $C$ an arc of circle of radius $\rho$. Subscripts denote the length of the straight line segments or the angle of turn in the arcs. Left and right turns are represented using $l$ and $r$ respectively. Furthermore, superscripts also specify if the vehicle is moving forward ($+$) or backward ($-$). Cusps (instants when the vehicle changes its velocity from moving forward to backward or vice-versa) are denoted using the symbol``$|$". Note that the symbol $|$ is not used when its usage is redundant, for example, in paths like $l^+l^-$ or $r^+r^-$ etc.  Table \ref{RSpaths} presents the sufficient family of solutions for the Reeds-Shepp problem proved in \cite{laumond96}. 

{{\small
\begin{table}[h!]
\begin{center}
\caption{Sufficient family of solutions for the Reeds-Shepp Problem}
\begin{tabular}{p{.1in}p{1.2in}p{1.6in}}
\hline
No. & Path type & Limitations \\ \hline
1 & $C_a | C_b | C_e$ & $a+b+e \leq \pi$ \\
2 & $C_aS_dC_b $ & $0\leq a,b\leq \frac{\pi}{2}, 0\leq d$   \\
3 & $C_a | C_b C_e$ or $C_e C_b |C_a$  & $0\leq a,e \leq b, 0\leq b\leq \frac{\pi}{2}$  \\
 &  & If $a=b$, $b\leq \frac{\pi}{3}$ \\
4 & $C_a | C_b C_b | C_e$ & $0\leq a,e < b, 0\leq b\leq \frac{\pi}{2}$  \\
5 & $C_a C_b | C_b C_e$ & $0\leq a,e < b, 0\leq b\leq \frac{\pi}{3}$  \\
6 & $C_a|C_{\frac{\pi}{2}} S_d C_{\frac{\pi}{2}} |C_b$ & $0\leq a,b < \frac{\pi}{2}, 0\leq d$  \\
7 &  $C_a|C_{\frac{\pi}{2}} S_d C_{b}$ or$~~~$ $C_bS_dC_{\frac{\pi}{2}} | C_{a}$ & $0\leq a \leq \pi, 0\leq b \leq \frac{\pi}{2}, 0 \leq d$  \\ \hline  \\
\end{tabular}
\label{RSpaths}
\end{center}
\end{table}
}}
  
\section{Path Planning Problem Statement}
The position and the orientation of the vehicle at time $t$ is represented as $(x(t),y(t),\theta(t))$, and $\rho$ denotes the minimum turning radius of the vehicle. Let $S:=\{1,2,\cdots,n\}$ denote the set of waypoints. Waypoint $i\in S$ is located at $(x_i,y_i)$ and must be visited at an angle in the interval $ I_i :=[\theta_i^{min},\theta_i^{max}]$ where $\theta_i^{min} < \theta_i^{max}$. 

Let the vehicle visit waypoint $i$ at orientation $\theta_i$. Given the orientations $\theta_i$ and $\theta_j$ at any two waypoints $i,j\in S$, let $d_{ij}(\theta_i,\theta_j)$ denote the length of the shortest Reeds-Shepp path to travel between $(x_i,y_i,\theta_i)$ and $(x_j,y_j,\theta_j)$.

In the first variant of the problem, let the sequence to visit the waypoints be in the order $(s_1,s_2,s_3,\cdots,s_n)$ where $s_i\in S, i=1,\cdots,n$. In the given sequence, the vehicle is starting at $s_1$ and is visiting $s_2$ next and so on, and ending its path at $s_n$. The objective is to find the orientations $\theta_{s_i}\in I_{s_i}, i=1,\cdots,n$ at the waypoints such that $\sum_{i=1}^{n-1} d_{s_is_{i+1}} (\theta_{s_i},\theta_{s_{i+1}}) $ is minimized.

The second variant of the problem aims to find the sequence also in addition to the orientations. The objective here is to find the sequence $(s_1,s_2,\cdots,s_n)$ in which the waypoints must be visited and the orientations $\theta_{s_i}\in I_{s_i}, i=1,\cdots,n$ such that each waypoint is visited once and $\sum_{i=1}^{n-1} d_{s_is_{i+1}} (\theta_{s_i},\theta_{s_{i+1}}) + d_{s_ns_1} (\theta_{s_{n}},\theta_{s_{1}})$ is minimized. Note that in the second variant, the vehicle returns to its first waypoint after visiting all the remaining waypoints.

To address the path planning problem, we first solve the following shortest path problem for just two points: Find the shortest path for the Reeds-Shepp vehicle from $(x_1,y_1)$ to $(x_2,y_2)$ such that $\theta_1\in I_1$ and $\theta_2 \in I_2$. This problem is referred to as the Reeds-Shepp interval problem. Solutions to this problem will play a crucial role later in developing lower and upper bounds for the path planning problem. 

\section{Reeds-Shepp Interval Problem}\label{sec:RSI}

There are two control inputs to the Reeds-Shepp vehicle: the velocity ($u_1$) and the turn rate ($u_2$) of the vehicle. Denote the input vector as $u = (u_1,u_2)$ where $u_1 \in [-1, 1] $\footnote{Note here that $u_1$ belongs to a closed set of values between -1 and 1 rather than belonging to $\{-1,+1\}$. As shown in Boissonat {\it et al}. \cite{boissonat} and Sussmann and Tang \cite{Sussmann91shortestpaths}, the optimal values of $u_1$ from the closed set $[-1,+1]$ in any case turns out to be -1 or +1. So, the assumption that $u_1 \in [-1,+1]$ is valid.} and $u_2 \in [-\frac{1}{\rho},\frac{1}{\rho}] $. The Reeds-Shepp interval problem is formulated as an optimal control problem as follows:

\begin{align}
\min_{u_1(t) \in [-1,1], u_2(t)\in [-\frac{1}{\rho},\frac{1}{\rho}]} &\int_0^{t_f} 1 dt
\end{align}
subject to
\begin{align}
\frac{dx}{dt} &= u_1\cos \theta, \nonumber \\
\frac{dy}{dt} &= u_1 \sin \theta,  \nonumber \\
\frac{d \theta}{dt} &= u_2,  \label{eq:kinematics}
\end{align}

and the following boundary conditions:
\begin{align}
x(0)=x_1, &\hspace{15pt} x(t_f)=x_2, \\
y(0)=y_1, &\hspace{15pt}  y(t_f)=y_2, \\
\theta_{1}^{min}-&\theta(0) \leq 0, \label{eq:theta0_1}\\
\theta(0) - &\theta_{1}^{max} \leq 0,\\
\theta_{2}^{min} - &\theta(t_f)\leq 0, \\
\theta(t_f) - &\theta_{2}^{max} \leq 0. \label{eq:thetaT_2}
\end{align}

Let the adjoint variables associated with $p(t)=(x(t),y(t),\theta(t))$ be denoted as $(\lambda_x(t),\lambda_y(t),\lambda_{\theta}(t))$. The Hamiltonian associated with the above system is defined as
\begin{align}
H(\Lambda,p,u) = \lambda_o +  u_1 \cos \theta \lambda_x +  u_1 \sin \theta \lambda_y + u_2 \lambda_{\theta} 
\end{align}
where $\Lambda(t)=$ $(\lambda_o, \lambda_x(t),\lambda_y(t),\lambda_{\theta}(t))$ and $\lambda_o$ is a scalar parameter. The differential equations governing the adjoint variables are defined as:
\begin{align}
\frac{d\lambda_x}{dt} &= 0, \label{eq:lambdax} \\
\frac{d\lambda_y}{dt} &= 0, \label{eq:lambday} \\
\frac{d \lambda_{\theta}}{dt} &= u_1 \sin \theta \lambda_x - u_1 \cos \theta \lambda_y. \label{eq:lambdat}
\end{align}

Applying the Pontryagin's minimum principle \cite{pontryagin} to the above problem, we obtain the following: If $u^*$ is an optimal control to the Reeds-Shepp interval problem, then there exists a non-zero adjoint vector $\Lambda(t)$ and $t_f>0$ such that $p(t),\Lambda(t)$ being the solution to the equations in \eqref{eq:kinematics} and \eqref{eq:lambdax}-\eqref{eq:lambdat} for $u(t) = u^*(t)$, the following conditions must be satisfied:

\begin{itemize}
\item  $H(\Lambda,p,u^*) \equiv$
$ \min_{u_1(t) \in [-1,1], u_2(t)\in [-\frac{1}{\rho},\frac{1}{\rho}]}H(\Lambda,p,u)$ $\forall t\in [0,t_f]$. \\

\item $H(\Lambda,p,u^*) \equiv 0$ $\forall t\in [0,t_f]$. \\
\item Suppose $\alpha_1,\alpha_2,\beta_1,\beta_2$ are the Lagrange multipliers corresponding to the boundary conditions in \eqref{eq:theta0_1}-\eqref{eq:thetaT_2} respectively. Then, we have,
    \begin{align}
    \alpha_1,\alpha_2,\beta_1,\beta_2  \geq 0, \label{eq:dual}\\
    \alpha_1(\theta_{1}^{min}-\theta(0)) =  0, \label{eq:alpha1} \\
\alpha_2(\theta(0) - \theta_{1}^{max}) = 0, \label{eq:alpha2}\\
\beta_1(\theta_{2}^{min}-\theta(t_f)) = 0, \label{eq:beta1}\\
\beta_2(\theta(t_f) - \theta_{2}^{max}) = 0, \label{eq:beta2}\\
\lambda_{\theta}(t_f) = \beta_2-\beta_1, \label{eq:lambdaT}\\
\lambda_{\theta}(0) = \alpha_1-\alpha_2. \label{eq:lambda0}
    \end{align}
\end{itemize}

Let $t_f^*$ denote the optimal time when the vehicle reaches $(x_2,y_2)$. We first summarize the main results on the Reeds-Shepp problem given the orientations at the waypoints from \cite{boissonat},\cite{Sussmann91shortestpaths}. These results will be used to solve the interval problem.

\begin{fact}\label{fact1}
Along an optimal path, $u_1$ is either +1 or -1 and $u_2$ is either $\frac{1}{\rho}$ or $-\frac{1}{\rho}$. There are two types of paths possible. The first type is of the form $C|C|C$ where $u_1$ is singular and $u_2$ doesn't change sign. The second type of paths lies between three parallel lines $D^+$, $D^-$ and $D^0$. The straight line segments and the inflection points (where $u_2$ switches sign) occur on $D^0$. All the cusps occur on $D^+$ or $D^-$. The orientation of the paths at any cusp is perpendicular to $D^0$.
\end{fact}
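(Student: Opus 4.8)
The plan is to read the entire structure off the Pontryagin conditions already listed, treating the switching functions of the two controls separately. First I would use \eqref{eq:lambdax}--\eqref{eq:lambday} to note that $\lambda_x,\lambda_y$ are constants, introduce the switching function $\phi(t):=\lambda_x\cos\theta(t)+\lambda_y\sin\theta(t)$ for $u_1$, and rewrite the Hamiltonian as $H=\lambda_o+u_1\phi+u_2\lambda_\theta$. Since $H$ is affine in $(u_1,u_2)$, pointwise minimization gives $u_1=-\operatorname{sgn}\phi$ and $u_2=-\tfrac1\rho\operatorname{sgn}\lambda_\theta$ wherever $\phi\neq0$ and $\lambda_\theta\neq0$; this is the bang-bang regime and already yields $u_1\in\{\pm1\}$ and $u_2\in\{\pm\tfrac1\rho\}$. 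The only configurations needing separate attention are the singular arcs $\phi\equiv0$ or $\lambda_\theta\equiv0$ on an interval, which I handle through the geometry below.

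The key algebraic step is a conserved quantity. Differentiating $\lambda_x y-\lambda_y x$ along \eqref{eq:kinematics} gives $\frac{d}{dt}(\lambda_x y-\lambda_y x)=u_1(\lambda_x\sin\theta-\lambda_y\cos\theta)$, which by \eqref{eq:lambdat} equals $\frac{d\lambda_\theta}{dt}$; hence $\lambda_\theta(t)=\lambda_x y(t)-\lambda_y x(t)+c$ for a constant $c$. Thus the sign of $\lambda_\theta$, equivalently the turning direction, is controlled by which side of the straight line $D^0:=\{\lambda_x y-\lambda_y x+c=0\}$ the vehicle lies on. Since $D^0$ has direction vector $(\lambda_x,\lambda_y)$, I can conclude at once that the inflection points ($\lambda_\theta=0$) occur exactly on $D^0$, and that a straight segment (where $u_2=0$ forces the interior minimizer $\lambda_\theta\equiv0$, and $\dot\lambda_\theta\equiv0$ then forces the heading to be parallel to $(\lambda_x,\lambda_y)$) must run along $D^0$.

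Next I would split on $r:=\sqrt{\lambda_x^2+\lambda_y^2}$. If $r=0$ then $\phi\equiv0$, so $u_1$ is singular, while $\dot\lambda_\theta=u_1(\lambda_x\sin\theta-\lambda_y\cos\theta)\equiv0$ makes $\lambda_\theta$ of constant sign; the turn direction never reverses and the path is a concatenation of equal-handed arcs separated by cusps, i.e.\ the type-1 family $C|C|C$. If $r\neq0$, write $(\lambda_x,\lambda_y)=r(\cos\psi,\sin\psi)$ so that $\phi=r\cos(\theta-\psi)$; this cannot vanish on an interval, since that would force simultaneously a perpendicular heading and a straight segment (whose heading on $D^0$ is parallel), a contradiction. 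Hence $u_1$ is genuinely bang-bang and its switches are the cusps, which occur where $\phi=0$, i.e.\ where the heading is perpendicular to $(\lambda_x,\lambda_y)$ and therefore to $D^0$. Evaluating $H\equiv0$ at a cusp (where $\phi=0$) gives $u_2\lambda_\theta=-\lambda_o$, so $\lambda_\theta=\mp\lambda_o\rho$; combined with $\lambda_\theta=\lambda_x y-\lambda_y x+c$ this places every cusp on one of the two lines $D^{\pm}:=\{\lambda_x y-\lambda_y x+c=\pm\lambda_o\rho\}$, parallel to $D^0$ and symmetric about it. This is precisely the type-2 geometry asserted.

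The hard part will not be the geometry but the singular-arc bookkeeping: proving that the optimal $u_1$ takes the extreme values $\pm1$ even on the $u_1$-singular arcs of type 1, where the minimum principle leaves $u_1$ undetermined, and ruling out chattering so that the extremal is a finite concatenation of the listed pieces. For the former I would lean on time-optimality together with the scaling/controllability argument referenced in the footnote and established in Boissonat \emph{et al.}\ \cite{boissonat} and Sussmann and Tang \cite{Sussmann91shortestpaths}; for the latter I would use that between consecutive crossings of $D^0$ the sign of $\lambda_\theta$ is fixed and $\phi$ behaves monotonically along each arc, so the switches of $u_1$ and $u_2$ are isolated. I expect the explicit enumeration of the canonical words to be routine once this switching structure and the $D^0,D^{\pm}$ incidence relations are in hand, so I would stop at establishing the two families.
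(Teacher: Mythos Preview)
The paper does not prove Fact~\ref{fact1}; it is stated without proof as a summary of the classical Reeds--Shepp analysis, with the derivation deferred to Boissonat \emph{et al.}~\cite{boissonat} and Sussmann and Tang~\cite{Sussmann91shortestpaths}. Your outline is essentially the standard argument found in those references: identifying the switching functions $\phi=\lambda_x\cos\theta+\lambda_y\sin\theta$ and $\lambda_\theta$, integrating \eqref{eq:lambdat} to obtain the linear invariant $\lambda_\theta=\lambda_x y-\lambda_y x+c$ defining $D^0$, splitting on $r=\sqrt{\lambda_x^2+\lambda_y^2}$ to separate the $u_1$-singular $C|C|C$ family from the generic case, and reading off the $D^\pm$ lines from $H\equiv0$ at a cusp. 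This is correct.

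Two small remarks. First, your contradiction for ruling out $\phi\equiv0$ when $r\neq0$ is phrased a bit loosely: the cleanest way is to note that $\phi\equiv0$ forces $\dot\theta=u_2=0$, hence $\lambda_\theta\equiv0$ for $u_2=0$ to be a minimizer, and then $\dot\lambda_\theta=0$ together with $\lambda_x\sin\theta-\lambda_y\cos\theta=\pm r\neq0$ forces $u_1=0$, which is incompatible with time-optimality. Second, you do not explicitly check that the path stays \emph{between} $D^+$ and $D^-$; this follows immediately from $H\equiv0$ in the form $\lambda_o=|\phi|+\tfrac1\rho|\lambda_\theta|$, giving $|\lambda_\theta|\le\lambda_o\rho$ everywhere. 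Both are easy additions and do not affect the validity of your plan.
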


\begin{fact}\label{fact2}
 The first type of paths, $C|C|C$, is possible only when $\lambda_\theta(t) \neq 0$ for all $t\in [0,t_f^*]$.
\end{fact}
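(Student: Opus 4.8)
The plan is to unpack the phrase ``$u_1$ is singular'' in Fact \ref{fact1} through the switching function of the velocity control and show that a singular velocity arc forces $\lambda_x=\lambda_y=0$; once this is in hand, the desired conclusion $\lambda_\theta(t)\neq 0$ follows almost immediately from the nontriviality of the adjoint vector together with the identity $H\equiv 0$. Since the statement is an ``only when,'' I would assume throughout that the optimal path is of the first ($C|C|C$) type and derive the conclusion about $\lambda_\theta$. First I would isolate the coefficient of $u_1$ in the Hamiltonian, $\phi(t):=\cos\theta(t)\,\lambda_x+\sin\theta(t)\,\lambda_y$, which is the switching function governing $u_1$: whenever $\phi\neq 0$, the minimum principle gives $u_1=-\mathrm{sign}(\phi)\in\{+1,-1\}$, so the only way $u_1$ can fail to be pinned down by the sign of $\phi$ is on an interval where $\phi\equiv 0$. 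This is precisely the meaning of ``$u_1$ singular,'' so along the $C|C|C$ path I may take $\phi\equiv 0$.

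The key step is to draw consequences from $\phi\equiv 0$. Since $\lambda_x,\lambda_y$ are constant by \eqref{eq:lambdax}--\eqref{eq:lambday}, I can write $\phi(t)=\sqrt{\lambda_x^2+\lambda_y^2}\,\cos\!\big(\theta(t)-\theta_0\big)$ for a fixed phase $\theta_0$. On each piece of a $C|C|C$ path the control $u_2=\pm\tfrac{1}{\rho}$ is nonzero (there are no straight segments), so $\theta(t)$ varies monotonically over a nondegenerate range; hence $\cos(\theta(t)-\theta_0)$ cannot vanish identically unless its amplitude does, which forces $\lambda_x=\lambda_y=0$. (Equivalently, differentiating $\phi\equiv 0$ and using $\dot\theta=u_2$ gives $u_2(\sin\theta\,\lambda_x-\cos\theta\,\lambda_y)\equiv 0$; since $u_2\neq 0$, the combination $\psi:=\sin\theta\,\lambda_x-\cos\theta\,\lambda_y$ also vanishes, and from $\phi^2+\psi^2=\lambda_x^2+\lambda_y^2$ one again gets $\lambda_x=\lambda_y=0$.) With $\lambda_x=\lambda_y=0$, equation \eqref{eq:lambdat} yields $\dot\lambda_\theta=0$, so $\lambda_\theta$ is constant along the entire path.

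To finish I would argue by contradiction: suppose $\lambda_\theta(t_0)=0$ for some $t_0\in[0,t_f^*]$. Because $\lambda_\theta$ is constant, $\lambda_\theta\equiv 0$; combined with $\lambda_x=\lambda_y=0$, every spatial adjoint vanishes, and substituting into the identity $H\equiv 0$ leaves $\lambda_o=0$ as well. Thus the full adjoint vector $\Lambda\equiv 0$, contradicting the requirement in the minimum principle that $\Lambda$ be nonzero. Hence $\lambda_\theta(t)\neq 0$ for all $t\in[0,t_f^*]$, which is Fact \ref{fact2}. I expect the main obstacle to be the careful justification that ``$u_1$ singular'' means $\phi\equiv 0$ on a nondegenerate interval, rather than $\phi$ merely vanishing at the isolated cusp instants, together with the accompanying verification that $u_2$ remains nonzero so that the monotone variation of $\theta$ can be invoked; once those points are nailed down, the remaining algebra is routine.
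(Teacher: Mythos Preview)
Your argument is correct and is, in fact, the standard derivation that appears in the cited references \cite{boissonat,Sussmann91shortestpaths}: singularity of $u_1$ forces the switching function $\phi=\lambda_x\cos\theta+\lambda_y\sin\theta$ to vanish identically; since $u_2\neq 0$ along each circular arc, differentiation (or the amplitude argument you give) yields $\lambda_x=\lambda_y=0$, whence $\lambda_\theta$ is constant, and nontriviality of $\Lambda$ together with $H\equiv 0$ rules out $\lambda_\theta=0$.

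Note, however, that the paper does not supply its own proof of Fact~\ref{fact2}. It is listed among the ``main results on the Reeds-Shepp problem'' that are \emph{summarized} from \cite{boissonat,Sussmann91shortestpaths} and then used as black boxes in the analysis of the interval problem. So there is no in-paper argument to compare against; what you have written is essentially the proof one finds in those sources. Your stated worry about the meaning of ``$u_1$ singular'' is already handled by Fact~\ref{fact1}, which asserts singularity of $u_1$ on the whole $C|C|C$ path (not merely at the cusp instants), so you may safely take $\phi\equiv 0$ on $[0,t_f^*]$ as your starting hypothesis.
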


\begin{fact}\label{fact3}
Equations \eqref{eq:lambdax}-\eqref{eq:lambday} imply $\lambda_x$ and $\lambda_y$ are constants. Equation \eqref{eq:lambdat} implies $\dot{\lambda}_\theta = \lambda_x \dot{y} - \lambda_y \dot{x}$. Integrating, we get, ${\lambda}_\theta = \lambda_x {y} - \lambda_y{x} + c$. Therefore, all the points on an optimal path with the same values of $\lambda_\theta$ lie on a straight line. 
\end{fact}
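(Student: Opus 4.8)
The plan is to read off the first two assertions directly from the costate dynamics and then turn the third into the claimed geometric statement by a single integration. First I would note that \eqref{eq:lambdax} and \eqref{eq:lambday} state $\dot\lambda_x=\dot\lambda_y=0$, so $\lambda_x$ and $\lambda_y$ are constant along the entire trajectory. Next I would substitute the kinematics \eqref{eq:kinematics}, namely $\dot x=u_1\cos\theta$ and $\dot y=u_1\sin\theta$, into the right-hand side of \eqref{eq:lambdat}. Since $u_1\sin\theta=\dot y$ and $u_1\cos\theta=\dot x$, equation \eqref{eq:lambdat} becomes $\dot\lambda_\theta=\lambda_x\dot y-\lambda_y\dot x$, exactly as claimed. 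This uses nothing about the controls beyond the consistent appearance of $u_1$ in $\dot x,\dot y$ and $\dot\lambda_\theta$, so it holds for both path types of Fact \ref{fact1}.

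Because $\lambda_x$ and $\lambda_y$ are constants, the right-hand side $\lambda_x\dot y-\lambda_y\dot x$ is an exact time derivative, equal to $\frac{d}{dt}\bigl(\lambda_x y-\lambda_y x\bigr)$. Integrating $\dot\lambda_\theta=\frac{d}{dt}\bigl(\lambda_x y-\lambda_y x\bigr)$ from $0$ to an arbitrary $t$ then yields $\lambda_\theta(t)=\lambda_x\,y(t)-\lambda_y\,x(t)+c$, with the constant of integration $c=\lambda_\theta(0)-\lambda_x\,y(0)+\lambda_y\,x(0)$. This is the affine relation asserted in the Fact, valid pointwise along the optimal path.

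To extract the geometric conclusion I would take any two points $p=(x,y)$ and $p'=(x',y')$ visited along the path at which $\lambda_\theta$ takes the same value. Subtracting the affine relation at the two points cancels both $c$ and the common value of $\lambda_\theta$, leaving $\lambda_x(y-y')=\lambda_y(x-x')$; equivalently, both points satisfy $\lambda_x y-\lambda_y x=\text{const}$. This is the equation of a straight line with normal $(-\lambda_y,\lambda_x)$ and direction $(\lambda_x,\lambda_y)$, so all points sharing a common $\lambda_\theta$ are collinear. As a consistency check, the level set $\lambda_\theta=0$ is precisely the line $D^0$ of Fact \ref{fact1} on which the straight segments lie, and its direction $(\lambda_x,\lambda_y)$ agrees with the heading there.

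The one step that genuinely needs care, and which I expect to be the only real obstacle, is ruling out the degenerate case $(\lambda_x,\lambda_y)=(0,0)$: there $\lambda_x y-\lambda_y x$ is identically zero, its ``level set'' is the whole plane rather than a line, and $\lambda_\theta\equiv c$ so every point trivially shares the same value. I would exclude this case before invoking the line conclusion, using the non-triviality of the adjoint vector from the minimum principle together with the conditions $H\equiv0$ and the minimization over $u_2$: if $\lambda_x=\lambda_y=0$ then $H=\lambda_o+u_2\lambda_\theta$ leaves $u_1$ entirely undetermined, forcing either a contradiction with non-triviality (when $\lambda_\theta$ also vanishes, so $\lambda_o=0$) or a degenerate pure-rotation path with unpinned velocity. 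With $(\lambda_x,\lambda_y)\neq(0,0)$ established, the level set is a genuine line and the proof is complete.
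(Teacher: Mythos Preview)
Your proposal is correct and follows essentially the same route as the paper: the paper does not give a separate proof of this Fact but embeds the derivation in the statement itself (constancy of $\lambda_x,\lambda_y$, substitution of the kinematics into \eqref{eq:lambdat}, then a single integration), and your write-up expands exactly those steps. Your added treatment of the degenerate case $(\lambda_x,\lambda_y)=(0,0)$ is a welcome piece of rigor the paper leaves implicit to the cited references.
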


\begin{fact}\label{fact4}
Any point along the optimal path corresponding to $\lambda_\theta = 0$ must lie on $D^0$.
\end{fact}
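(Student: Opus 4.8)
The plan is to prove the statement by identifying the zero locus of $\lambda_{\theta}$ with the line $D^0$. Fact~\ref{fact3} already gives $\lambda_{\theta}=\lambda_x y-\lambda_y x+c$ along any optimal path, so the set of path points with $\lambda_{\theta}=0$ satisfies the affine equation $\lambda_x y-\lambda_y x+c=0$ and, whenever $(\lambda_x,\lambda_y)\neq(0,0)$, lies on a single straight line $L$. I would first dispose of the degenerate case $(\lambda_x,\lambda_y)=(0,0)$: here \eqref{eq:lambdat} makes $\lambda_{\theta}$ constant, and were that constant zero then $H\equiv 0$ would force $\lambda_o=0$ as well, so the entire adjoint vector $\Lambda$ would vanish, contradicting the nontriviality required by Pontryagin's principle; hence $\lambda_{\theta}$ is a nonzero constant, no path point satisfies $\lambda_{\theta}=0$, and the claim is vacuous. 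The same holds for the first ($C|C|C$) family by Fact~\ref{fact2}. It therefore remains to treat a second-type path with $(\lambda_x,\lambda_y)\neq(0,0)$ and to show $L=D^0$.

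Next I would use the minimum principle to place the features defining $D^0$ onto $L$. Because $H$ is linear in $u_2$ with coefficient $\lambda_{\theta}$, its minimization forces $u_2=-\tfrac{1}{\rho}\,\mathrm{sign}(\lambda_{\theta})$ whenever $\lambda_{\theta}\neq 0$. On a straight-line segment $u_2\equiv 0$, an interior value of $[-\tfrac{1}{\rho},\tfrac{1}{\rho}]$, which is consistent with minimizing $H$ only on a singular arc where $\lambda_{\theta}\equiv 0$; hence every straight segment lies on $L$. At an inflection point $u_2$ switches between $+\tfrac{1}{\rho}$ and $-\tfrac{1}{\rho}$, so $\lambda_{\theta}$ changes sign there and, being continuous, vanishes at that point; hence every inflection point lies on $L$ as well.

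I would then conclude by identifying the two lines. By Fact~\ref{fact1}, $D^0$ is by definition the line carrying all the straight segments and inflection points of the path, and by the previous step each of these features lies on $L$; hence $D^0\subseteq L$, and since both are straight lines sharing these points, $D^0=L$. Because Fact~\ref{fact3} places every point of the path with $\lambda_{\theta}=0$ on $L$, each such point lies on $D^0$, which is the claim.

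The step I expect to be the main obstacle is the clean identification $D^0=L$ rather than the weaker inclusion $D^0\subseteq L$. This requires confirming that the $\lambda_{\theta}=0$ features genuinely determine a line (a point needing care for an arc-dominated path with few inflections) and that the singular-arc reasoning yields $\lambda_{\theta}\equiv 0$ along an entire straight piece, not merely at isolated instants; both, however, are already forced by the affine form of $\lambda_{\theta}$ in Fact~\ref{fact3} together with the switching structure of $u_2$.
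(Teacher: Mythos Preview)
The paper does not supply its own proof of Fact~\ref{fact4}; it is stated without argument as part of the summary of known results from \cite{boissonat,Sussmann91shortestpaths}. In those sources the three parallel lines $D^+,D^0,D^-$ are \emph{defined} directly via the adjoint system: $D^0$ is the line $\{(x,y):\lambda_x y-\lambda_y x+c=0\}$, and $D^\pm$ are its translates at the extreme values of $\lambda_\theta$. With that definition in hand, Fact~\ref{fact4} is immediate from the integrated form $\lambda_\theta=\lambda_x y-\lambda_y x+c$ in Fact~\ref{fact3}.

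Your argument is correct in spirit but inverts the logical order: you take the \emph{properties} listed in Fact~\ref{fact1} (straight segments and inflection points lie on $D^0$) as the working characterization of $D^0$, derive that those features all satisfy $\lambda_\theta=0$, and then identify $D^0$ with the zero locus $L$. The concern you yourself flag---that a second-type path might have neither straight segments nor inflection points, leaving $D^0$ undetermined by Fact~\ref{fact1} alone---is a genuine gap in that route, and your closing remark does not actually close it (Fact~\ref{fact3} and the switching structure tell you where $L$ is, not where $D^0$ is). The resolution is simply that $D^0$ is defined to be $L$ in the first place; once you adopt that definition, your singular-arc and sign-change arguments become the \emph{proof of Fact~\ref{fact1}} rather than of Fact~\ref{fact4}, and Fact~\ref{fact4} collapses to a one-line consequence of Fact~\ref{fact3}.
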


%\begin{fact}\label{fact5}
%Any arc of circle of an optimal path between two points with $\lambda_\theta=0$ has a length greater than $\pi \rho$.
%\end{fact}

{{\small
\begin{table*}[h!]
\begin{center}
\caption{Validity of Reeds-Shepp Interval solutions for $\theta(0)=\theta_1^{max}$ and $\theta(t_f^*)=\theta_2^{min}$}
\begin{tabular}{p{1.1in}p{.6in}p{.65in}p{.7in}p{.6in}p{.6in}p{.8in}p{.8in}}
\hline
Path type: & $C_a | C_b | C_e$ & $C_aS_dC_b$ &  $C_a | C_b C_e$ or $C_e C_b |C_a$ & $C_a | C_b C_b | C_e$ & $C_a C_b | C_b C_e$ & $C_a|C_{\frac{\pi}{2}} S_d C_{\frac{\pi}{2}} |C_b$ & $C_a|C_{\frac{\pi}{2}} S_d C_{b}$ or$~~~$ $C_bS_dC_{\frac{\pi}{2}} | C_{a}$
 \\ 
 \\
Limitations:  & $a+b+e \leq \pi$ & $0\leq a,b\leq \frac{\pi}{2}$ &  $0\leq a,e \leq b$ & $0\leq a,e < b$ & $0\leq a,e < b$ & $0\leq a,b < \frac{\pi}{2}$ & $0\leq a \leq \pi$
 \\ 
   &  & $0\leq d$ &  $0\leq b\leq \frac{\pi}{2}$ & $0\leq b\leq \frac{\pi}{2}$ & $0\leq b\leq \frac{\pi}{3}$ & $0\leq d$ & $ 0\leq b \leq \frac{\pi}{2}$$~~~~$ $0 \leq d$
 \\ 
    &  & &  If $a=b,~~~~$, $0\leq b\leq \frac{\pi}{3}$ & & & & \\ 
 \hline  \\
Boundary conditions: & & & & & & & \\ 
$\lambda_{\theta}(0) =\lambda_{\theta}(t_f^*)= 0$ &  Not possible  & $S_d$ & $l_b | l_b$, $r_b | r_b$ & Not possible &  $l_b | l_b$, $r_b | r_b$ &  Not possible & $l_\frac{\pi}{2} | l_{\frac{\pi}{2}} S_d$\\ 
& & & & & & & \\
$\lambda_{\theta}(0) < 0,\lambda_{\theta}(t_f^*)= 0$ & Not possible & $l_aS_d$ & $l_a |  l_b, l_ar_b|r_b$ & Not possible & $l_ar_b|r_b$ & Not possible  & $l_a | l_{\frac{\pi}{2}} S_d$ \\
& & & & & & & \\
$\lambda_{\theta}(0) = 0,\lambda_{\theta}(t_f^*)< 0$ & Not possible & $S_dl_b$ & $l_b |  l_a, r_b| r_bl_a$ & Not possible &  $r_b|r_bl_a$ & Not possible & $S_dl_\frac{\pi}{2}|l_{a} $ \\
& & & & & & & \\
$\lambda_{\theta}(0) < 0,\lambda_{\theta}(t_f^*)< 0$ & $l_a|l_b|l_e$ & $l_aS_dl_b$ & $l_a | l_b$ & Not possible & $l_ar_b|r_bl_e$ & $l_a|l_\frac{\pi}{2}S_dl_\frac{\pi}{2}|l_b$ & $l_a|l_{\frac{\pi}{2}} S_d l_b$ $l_b S_dl_{\frac{\pi}{2}} | l_a$\\
%& & & & & & & \\
\hline \\
\end{tabular}
\label{table3}
\end{center}
\end{table*}
}}

\section{Sufficient family of solutions to the Reeds-Shepp Interval Problem}\label{sec:RSfamily}

We will state the first main result of this article.

\begin{theorem}\label{theorem1} The shortest path for the Reeds-Shepp vehicle between the waypoints $(x_1,y_1)$ and $(x_2,y_2)$ where $\theta(0)\in I_1$ and $\theta(t_f^*) \in I_2$ that is piecewise $C^2$, and either $C^1$ or with cusps, must be of type listed below or a subset thereof. 
\begin{itemize}
\item $\theta(0)=\theta_1^{max}$ and $\theta(t_f^*) = \theta_2^{min}$: \vspace{.2cm} \newline 
$l|l|l$,  $lSl$, $l|l$, $lr|rl$, $l|l_\frac{\pi}{2}Sl_\frac{\pi}{2}|l$, $l|l_{\frac{\pi}{2}} S l$, $l Sl_{\frac{\pi}{2}} | l$.  \vspace{.2cm}
\item $\theta(0)=\theta_1^{min}$ and  $\theta(t_f^*)=\theta_2^{max}$: \vspace{.2cm} \newline 
$r|r|r$,  $rSr$, $r|r$, $rl|lr$, $r|r_\frac{\pi}{2}Sr_\frac{\pi}{2}|r$, $r|r_{\frac{\pi}{2}} S r$, $r Sr_{\frac{\pi}{2}} | r$.  \vspace{.2cm}
\item $\theta(0)=\theta_1^{max}$ and $\theta(t_f^*)=\theta_2^{max}$: \vspace{.2cm} 
\newline
$lSr$, $l | lr$, $lr|r$, $l| lr| r$, $l|l_\frac{\pi}{2}Sr_\frac{\pi}{2}|r$, $l|l_{\frac{\pi}{2}} S r$, $lSr_\frac{\pi}{2}|r$. 
\vspace{.2cm} 

\item $\theta(0)=\theta_1^{min}$ and $\theta(t_f^*)=\theta_2^{min}$: \vspace{.2cm} \newline 
$rSl$, $r| rl$, $rl|l$, $r|  rl| l$, $r|r_\frac{\pi}{2}Sl_\frac{\pi}{2}|l$, $r|r_{\frac{\pi}{2}} S l$, $rSl_\frac{\pi}{2}|l$.
\vspace{.2cm} 

\item $\theta_1^{min} < \theta(0) < \theta_1^{max}$ and $\theta_2^{min} < \theta(t_f^*) < \theta_2^{max}$: \vspace{.2cm} 
\newline  
$S$, $l| l$, $r|  r$, $l_\frac{\pi}{2}|l_{\frac{\pi}{2}} S$.

\vspace{.2cm}

\item $\theta(0)=\theta_1^{min}$ and $ \theta_2^{min} < \theta(t_f^*) < \theta_2^{max}$:  \vspace{.2cm}
\newline  
$rS$, $r|r$, $rl|l$, $r|r_{\frac{\pi}{2}} S$.
\vspace{.2cm}

\item $\theta_1^{min} < \theta(0) < \theta_1^{max}$ and $\theta(t_f^*)=\theta_2^{min} $: \vspace{.2cm} 
\newline 
$Sl$, $l|l$, $r|rl$, $Sl_\frac{\pi}{2}|l$.
\vspace{.2cm}

\item $ \theta_1^{min} < \theta(0) < \theta_1^{max}$ and  $\theta(t_f^*)=\theta_2^{max} $: \vspace{.2cm} 
\newline 
$Sr$, $r| r$, $l|lr$, $Sr_\frac{\pi}{2}|r$.
\vspace{.2cm}

\item $\theta(0)=\theta_1^{max}$ and $\theta_2^{min} < \theta(t_f^*) < \theta_2^{max} $:  \vspace{.2cm}
\newline  
$lS$, $l|l$, $lr|r$, $l|l_{\frac{\pi}{2}} S$.
\vspace{.2cm}

\end{itemize}
\end{theorem}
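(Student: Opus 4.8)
The plan is to reduce the interval problem, for each admissible realization of the endpoint angles, to a classical fixed-endpoint Reeds--Shepp problem and then to use the \emph{extra} transversality conditions of the interval problem to prune the family of Table~\ref{RSpaths} down to the listed subfamilies. First I would observe that once $\theta(0)$ and $\theta(t_f^*)$ are fixed at their optimal values, any optimal interval path is simultaneously an optimal path between the two oriented configurations $(x_1,y_1,\theta(0))$ and $(x_2,y_2,\theta(t_f^*))$; hence it must be one of the seven types in Table~\ref{RSpaths}. All of the work is therefore in deciding, as a function of where $\theta(0)$ and $\theta(t_f^*)$ sit inside their intervals, which of these seven types can occur and in what reduced form.

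The second step extracts the endpoint sign information from Pontryagin's conditions. Using the complementary-slackness relations \eqref{eq:alpha1}--\eqref{eq:beta2} together with \eqref{eq:dual}, \eqref{eq:lambdaT} and \eqref{eq:lambda0}, and assuming nondegenerate intervals $\theta_i^{min}<\theta_i^{max}$, I would show: if $\theta(0)$ is interior to $I_1$ then $\alpha_1=\alpha_2=0$ and $\lambda_\theta(0)=0$; if $\theta(0)=\theta_1^{max}$ then $\alpha_1=0$ and $\lambda_\theta(0)=-\alpha_2\le 0$; and if $\theta(0)=\theta_1^{min}$ then $\lambda_\theta(0)=\alpha_1\ge 0$; with the symmetric statements for $\lambda_\theta(t_f^*)$ obtained from $\beta_1,\beta_2$. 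This is exactly the partition of the theorem into nine cases (three positions for each endpoint), and it fixes the admissible signs of $\lambda_\theta$ at $t=0$ and $t=t_f^*$ in each case.

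The third and main step matches these endpoint signs to the internal structure of each family in Table~\ref{RSpaths}. The Hamiltonian-minimization condition makes $u_2$ bang-bang with sign opposite to that of $\lambda_\theta$, so a nonzero $\lambda_\theta$ at an endpoint fixes the turning sense of the terminal arc there, whereas $\lambda_\theta=0$ at an endpoint forces that point onto $D^0$ (Fact~\ref{fact4}), so that the path begins or ends with a straight segment or degenerates the terminal arc to zero angle. The crucial observation is that, by Fact~\ref{fact3}, $\lambda_\theta$ is affine in position and vanishes only on $D^0$: it changes sign exactly at the inflection points, where $u_2$ switches, and merely attains an extremum, without changing sign, at each cusp on $D^\pm$ (Fact~\ref{fact1}). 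Hence the sign of $\lambda_\theta(0)$ relative to $\lambda_\theta(t_f^*)$ is governed by the parity of the number of interior inflections of the path, an even number forcing equal signs and an odd number forcing opposite signs. Applying this together with the admissible-sign list of the second step to each of the seven families reproduces Table~\ref{table3} for the representative corner case $\theta(0)=\theta_1^{max}$, $\theta(t_f^*)=\theta_2^{min}$ (where both endpoint signs must be $\le 0$), and pins down the ``Not possible'' entries: $C_a|C_b|C_e$ requires $\lambda_\theta\neq 0$ throughout (Fact~\ref{fact2}) and so survives only when both signs are strictly negative, while $C_a|C_bC_b|C_e$ has a single interior inflection and therefore forces opposite endpoint signs, which is incompatible with $(\le 0,\le 0)$.

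Finally, I would obtain the remaining corner cases from the mirror symmetry $l\leftrightarrow r$, which negates $\lambda_\theta$ and hence all endpoint signs, carrying $(\theta_1^{max},\theta_2^{min})$ to $(\theta_1^{min},\theta_2^{max})$ and the opposite-sign corner $(\theta_1^{max},\theta_2^{max})$ to $(\theta_1^{min},\theta_2^{min})$; the four mixed cases follow by setting the interior endpoint's value of $\lambda_\theta$ to exactly $0$ in the corresponding corner analysis, which simply deletes the terminal arc on that side. The path types listed in the theorem are the maximal members of each case, with the shorter entries of Table~\ref{table3} being the promised subsets. I expect the hard part to be the third step: rigorously tracking the sign of $\lambda_\theta$ across cusps and inflections for the cusp-type and $S$-containing families and proving each ``Not possible'' elimination, since there the bookkeeping of which side of $D^0$ the path lies on is delicate and is precisely where an erroneous case could slip in.
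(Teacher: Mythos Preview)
Your proposal is correct and follows essentially the same route as the paper: deduce the sign constraints on $\lambda_\theta(0)$ and $\lambda_\theta(t_f^*)$ from complementary slackness (this is the paper's Lemma~\ref{lemma1}), then prune Table~\ref{RSpaths} case by case using Facts~\ref{fact1}--\ref{fact4}, treating one representative corner case in detail (the paper does $(\theta_1^{max},\theta_2^{min})$ via Lemma~\ref{lemma2} and Table~\ref{table3}) and invoking symmetry for the rest. Your parity-of-inflections packaging of the sign-tracking is a slightly cleaner way to say what the paper does by direct tabulation, but it is the same argument and buys the same conclusions.
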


In the rest of this section, we will prove this theorem. The orientation of the vehicle at waypoint 1 must either belong to the boundaries of $I_1$ or belong to the interior of $I_1$, $i.e.$, $\theta(0) = \theta_1^{min}$ or $\theta(0) = \theta_1^{max}$ or $\theta(0) \in (\theta_1^{min},\theta_1^{max})$. Similarly, $\theta(t_f^*) = \theta_2^{min}$ or $\theta(t_f^*) = \theta_2^{max}$ or $\theta(t_f^*) \in (\theta_2^{min},\theta_2^{max})$. Combinations of choices of these angles influence the $\lambda_\theta$ values at the waypoints through equations \eqref{eq:dual}-\eqref{eq:lambda0} which further restrict the choice of solutions for the interval problem. We will prove the result for $\theta(0)=\theta_1^{max}$ and $\theta(t_f^*) = \theta_2^{min}$. Other combinations of angles in the theorem can be shown in a similar way. 

\begin{lemma}\label{lemma1}
$\theta(0)=\theta_1^{max}$ and $\theta(t_f^*) = \theta_2^{min}$
implies $\lambda_\theta(0) \leq 0$ and $\lambda_\theta(t_f^*) \leq 0$.   
\end{lemma}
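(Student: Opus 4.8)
The plan is to read this off directly from the complementary-slackness and transversality conditions supplied by Pontryagin's minimum principle, namely \eqref{eq:dual}--\eqref{eq:lambda0}. The only structural fact I need beyond those equations is the standing hypothesis $\theta_i^{min} < \theta_i^{max}$, which guarantees that the two interval endpoints are distinct; this is what lets me conclude that certain inequality constraints are strictly inactive and hence carry zero multiplier.

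First I would handle the initial angle. Since $\theta(0)=\theta_1^{max}$ and $\theta_1^{min}<\theta_1^{max}$, the quantity $\theta_1^{min}-\theta(0)=\theta_1^{min}-\theta_1^{max}$ is strictly negative, so the constraint \eqref{eq:theta0_1} is inactive. Complementary slackness \eqref{eq:alpha1} then forces $\alpha_1=0$. Substituting into the transversality relation \eqref{eq:lambda0} gives $\lambda_\theta(0)=\alpha_1-\alpha_2=-\alpha_2$, and nonnegativity of the multipliers \eqref{eq:dual} yields $\lambda_\theta(0)=-\alpha_2\leq 0$, as claimed.

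The terminal angle is entirely symmetric. With $\theta(t_f^*)=\theta_2^{min}$ and $\theta_2^{min}<\theta_2^{max}$, the quantity $\theta(t_f^*)-\theta_2^{max}=\theta_2^{min}-\theta_2^{max}$ is strictly negative, so the constraint \eqref{eq:thetaT_2} is inactive and complementary slackness \eqref{eq:beta2} forces $\beta_2=0$. The transversality relation \eqref{eq:lambdaT} then gives $\lambda_\theta(t_f^*)=\beta_2-\beta_1=-\beta_1\leq 0$, again using \eqref{eq:dual}.

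I do not anticipate any genuine obstacle here: the argument is a two-line application of complementary slackness, with the sign of each bound determined by which interval endpoint the optimal angle sits at. The one point that warrants care is the strictness of the inequalities $\theta_i^{min}<\theta_i^{max}$ — it is precisely this strictness (rather than a nondegeneracy assumption on the multipliers) that kills $\alpha_1$ and $\beta_2$ and pins down the signs; without it the boundary case $\theta_i^{min}=\theta_i^{max}$ would leave the relevant multiplier unconstrained and the conclusion could fail. I would state that hypothesis explicitly at the start of the proof so that the two slackness deductions are clean.
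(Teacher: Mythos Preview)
Your proposal is correct and follows exactly the same approach as the paper: use complementary slackness \eqref{eq:alpha1}, \eqref{eq:beta2} to kill $\alpha_1$ and $\beta_2$, then read off the signs of $\lambda_\theta(0)$ and $\lambda_\theta(t_f^*)$ from \eqref{eq:lambda0}, \eqref{eq:lambdaT} together with the nonnegativity \eqref{eq:dual}. Your additional remark that the standing hypothesis $\theta_i^{min}<\theta_i^{max}$ is what makes the relevant constraints strictly inactive is a nice clarification the paper leaves implicit.
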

\begin{proof}
Using equation \eqref{eq:alpha1}, $\theta(0)=\theta_1^{max}$ implies $\alpha_1 = 0$. Therefore, $\lambda_\theta(0) = \alpha_1-\alpha_2 \leq 0$. Using equation \eqref{eq:beta2},  $\theta(t_f^*)=\theta_2^{min}$ implies $\beta_2 = 0$. Therefore, $\lambda_\theta(t_f^*) = \beta_2-\beta_1 \leq 0$. 
\end{proof}

\begin{lemma}\label{lemma2}
$\lambda_\theta(0) = 0$ and $\lambda_\theta(t_f^*) = 0$ allows for the following set of solutions for the Reeds-Shepp interval problem: $S$, $l| l$, $r|  r$, $l_\frac{\pi}{2}|l_{\frac{\pi}{2}} S$. 
\end{lemma}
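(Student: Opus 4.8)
The plan is to begin from the sufficient family of Reeds--Shepp paths already fixed by the endpoint angles (Table \ref{RSpaths}) and to impose the two additional conditions $\lambda_\theta(0)=0$ and $\lambda_\theta(t_f^*)=0$, discarding every incompatible path type and pinning down the surviving sub-path in each of the rest. Two reductions are essentially immediate. First, by Fact \ref{fact2} a $C|C|C$ path requires $\lambda_\theta(t)\neq 0$ on all of $[0,t_f^*]$, which contradicts $\lambda_\theta(0)=0$; so $C_a|C_b|C_e$ is excluded and every candidate is a second-type path. Second, by Fact \ref{fact4} the vanishing of $\lambda_\theta$ at both ends forces the initial point $(x_1,y_1)$ and the terminal point $(x_2,y_2)$ to lie on $D^0$. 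Together with Fact \ref{fact1} --- which puts every straight segment and inflection on $D^0$ and every cusp (oriented perpendicular to $D^0$) strictly on $D^+$ or $D^-$ --- and with Fact \ref{fact3}, which makes $\lambda_\theta$ an affine function of position so that $\lambda_\theta$ is proportional to the signed distance from $D^0$, this is the single geometric constraint that drives the whole argument.

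Next I would examine the remaining types 2 through 7 and, using that both endpoints sit on $D^0$, determine the admissible prefixes and suffixes. The guiding observation is that a bounding arc whose free end is an endpoint of the whole path must either be trivial or must be a symmetric ``return'' arc that leaves $D^0$, reaches a cusp, and comes back to $D^0$; since $\lambda_\theta$ is proportional to distance from $D^0$ and attains its extreme exactly at the cusp, such a return arc has its angle fixed by the cusp height $h$ via $h=\rho\sin(\text{arc angle})$. For $C_aS_dC_b$ the straight piece already lies on $D^0$, so any nonzero bounding arc pushes its endpoint off $D^0$ and both arcs must vanish, leaving $S$. For $C_a|C_bC_e$ (and its reverse) and for $C_aC_b|C_bC_e$, which carry a single cusp and one inflection, matching the cusp height forces the two arcs flanking the cusp to be equal and the trailing arc past the inflection to vanish, collapsing the path to $l_b|l_b$ or $r_b|r_b$ (the two signs corresponding to placing the lone cusp on $D^+$ or $D^-$). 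For $C_a|C_{\frac{\pi}{2}}S_dC_b$ (and its reverse) the straight segment anchors one endpoint on $D^0$, the trailing arc must vanish, and the leading return arc is forced to turn through exactly $\frac{\pi}{2}$ so as to meet $D^0$ tangentially and align with the straight piece, giving $l_{\frac{\pi}{2}}|l_{\frac{\pi}{2}}S$.

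The steps I expect to resist are the two-cusp types $C_a|C_bC_b|C_e$ and $C_a|C_{\frac{\pi}{2}}S_dC_{\frac{\pi}{2}}|C_b$, and the angle-forcing in general. Here the argument is to show that each terminal return arc must turn through the same angle as the adjacent interior arc (equal to $b$, respectively $\frac{\pi}{2}$, by the relation $h=\rho\sin(\cdot)$), which collides with the strict inequalities $a,e<b$ and $a,b<\frac{\pi}{2}$ that define those two types; this contradiction is what makes both ``not possible.'' The delicate point throughout is the quantitative claim that an arc meeting a perpendicular cusp on $D^\pm$ returns to $D^0$ only after a specific turn angle, so I would verify the cusp-height relation carefully and check that it is consistent with the limitations in Table \ref{RSpaths}. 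Assembling the survivors $S$, $l|l$, $r|r$, and $l_{\frac{\pi}{2}}|l_{\frac{\pi}{2}}S$ then completes the proof.
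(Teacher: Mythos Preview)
Your proposal is correct and follows essentially the same approach as the paper: exclude $C|C|C$ via Fact~\ref{fact2}, place both endpoints on $D^0$ via Fact~\ref{fact4}, and then run through the remaining path types in Table~\ref{RSpaths} to determine which survive and in what degenerate form. The paper's own proof is terser---it simply asserts, for each type, which arc angles must vanish or be equal for the endpoints to lie on $D^0$---whereas you supply the quantitative cusp-height relation $h=\rho\sin(\cdot)$ and the affine-$\lambda_\theta$ picture from Fact~\ref{fact3} to justify those forced angles; this extra detail is helpful but does not change the route.
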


\begin{proof} 
Using Fact \ref{fact2}, $C|C|C$ is not possible. $\lambda_\theta(0) =\lambda_\theta(t_f^*) = 0$ implies that both the end points of an optimal path must lie on $D^0$ (Fact \ref{fact4}). This will further limit the validity of each of the paths given in table \ref{RSpaths} as follows:
\begin{itemize}
\item $C_aS_dC_b$ is not possible unless the turn angles $a=b=0$. 
\item $C_a|C_b C_e$ or $C_eC_b | C_a$ is not possible unless $e=0$ and $a=b$. Similarly, $C_aC_b|C_b C_e$ is not possible unless $a=e=0$. 
\item $C_a|C_{\frac{\pi}{2}} S_d C_{\frac{\pi}{2}} |C_b$ is not possible because the end points cannot lie on $D^0$.
\item $C_a|C_{\frac{\pi}{2}} S_d C_{b}$ is possible provided $a=\frac{\pi}{2}$ and $b=0$. In this type, it is sufficient to consider $l^+_\frac{\pi}{2}l^-_{\frac{\pi}{2}} S$ and $l^-_\frac{\pi}{2}l^+_{\frac{\pi}{2}} S$ as the other possibilities $r^+_\frac{\pi}{2}r^-_{\frac{\pi}{2}} S$ and $r^-_\frac{\pi}{2}r^+_{\frac{\pi}{2}} S$ produce the same lengths and angles at the waypoints.
\end{itemize}
Therefore, for the given boundary conditions, a sufficient family of solutions is \{$S$, $l|l$, $r|r$, $l_\frac{\pi}{2}|l_{\frac{\pi}{2}} S$\}.
\end{proof}

\begin{lemma}
$\theta(0)=\theta_1^{max}$ and $\theta(t_f^*) = \theta_2^{min}$ allows for the following set of solutions for the Reeds-Shepp interval problem or a subset thereof: $l|l|l$,  $lSl$, $l|l$, $lr|rl$, $l|l_\frac{\pi}{2}Sl_\frac{\pi}{2}|l$, $l|l_{\frac{\pi}{2}} S l$, $l Sl_{\frac{\pi}{2}} | l$.   
\end{lemma}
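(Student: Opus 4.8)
The plan is to reduce everything to the sign of $\lambda_{\theta}$ at the two endpoints. By Lemma~\ref{lemma1}, the boundary angles $\theta(0)=\theta_1^{max}$, $\theta(t_f^*)=\theta_2^{min}$ force $\lambda_{\theta}(0)\le 0$ and $\lambda_{\theta}(t_f^*)\le 0$, so exactly four subcases arise according to whether each endpoint value is $0$ or strictly negative; these are the four rows of table~\ref{table3}. The subcase $\lambda_{\theta}(0)=\lambda_{\theta}(t_f^*)=0$ is already settled by Lemma~\ref{lemma2}. I would show that the solutions in every subcase are (possibly degenerate instances of) the seven path types in the statement, with the doubly strict subcase $\lambda_{\theta}(0)<0,\lambda_{\theta}(t_f^*)<0$ realizing all seven in full and the remaining three subcases producing them with one or more pieces of zero length --- precisely the meaning of ``or a subset thereof.''

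The engine is the pointwise minimization of $H$. Minimizing $u_2\lambda_{\theta}$ over $u_2\in[-\tfrac{1}{\rho},\tfrac{1}{\rho}]$ forces $u_2=+\tfrac{1}{\rho}$ (a left turn $l$) wherever $\lambda_{\theta}<0$ and $u_2=-\tfrac{1}{\rho}$ (a right turn $r$) wherever $\lambda_{\theta}>0$; only where $\lambda_{\theta}=0$ may a straight segment or an inflection (sign change of $u_2$) occur. Hence the sign of $\lambda_{\theta}$ reads off the turn letter at every instant, so $\lambda_{\theta}(0)<0$ means the path opens with $l$ and $\lambda_{\theta}(t_f^*)<0$ means it closes with $l$. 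By Fact~\ref{fact3}, $\lambda_{\theta}$ is affine in position, so its zero set is a single line, which by Fact~\ref{fact4} is $D^0$; thus $\lambda_{\theta}$ cannot change sign inside a single arc and can flip only when the path crosses $D^0$, i.e.\ at a straight piece or an inflection, while every cusp sits on $D^{\pm}$ with $\lambda_{\theta}\ne 0$.

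I would then specialize each of the seven Reeds--Shepp families of table~\ref{RSpaths} to these signs. For $C|C|C$, Fact~\ref{fact2} gives $\lambda_{\theta}\ne 0$ throughout, so its sign is constant; being $\le 0$ at both ends and never zero, $\lambda_{\theta}<0$ everywhere and the path is forced to $l|l|l$. For $C_aS_dC_b$ the straight piece sits at $\lambda_{\theta}=0$, so the two flanking arcs inherit the endpoint signs and the family specializes to $lSl$, with $lS$, $Sl$, $S$ appearing as degeneracies when an endpoint value is $0$. The families with an internal inflection are the substantive ones: tracing $\lambda_{\theta}$ from its negative start value and recalling that it flips sign only on $D^0$, one checks whether the sign carried into the terminal arc is compatible with $\lambda_{\theta}(t_f^*)\le 0$. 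This either forces a sub-arc to vanish --- in $C_a|C_bC_e$ the trailing arc would need $\lambda_{\theta}>0$, so it must have length $0$, leaving $l|l$ --- or rules the family out: in $C_a|C_bC_b|C_e$ the mandatory middle inflection turns $\lambda_{\theta}$ positive before the final cusp, so the path would terminate on a right turn with $\lambda_{\theta}(t_f^*)>0$, contradicting $\lambda_{\theta}(t_f^*)\le 0$; hence this family is impossible, matching the ``Not possible'' entries of table~\ref{table3}. The same accounting applied to $C_aC_b|C_bC_e$, $C_a|C_{\frac{\pi}{2}}S_dC_{\frac{\pi}{2}}|C_b$, and $C_a|C_{\frac{\pi}{2}}S_dC_b$ yields $lr|rl$, $l|l_{\frac{\pi}{2}}Sl_{\frac{\pi}{2}}|l$, and $l|l_{\frac{\pi}{2}}Sl$, $lSl_{\frac{\pi}{2}}|l$, completing the seven types.

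The main obstacle is the sign bookkeeping for the multi-arc families: one must locate every place where $\lambda_{\theta}$ changes behavior, carefully distinguishing an inflection (on $D^0$, where $\lambda_{\theta}=0$ and the turn letter flips) from a cusp (on $D^{\pm}$, where $\lambda_{\theta}\ne 0$, the letter persists, and only the velocity reverses), and then confirm that the resulting sign sequence is simultaneously consistent with Fact~\ref{fact3} and with both endpoint constraints. It is precisely this accounting that decides, family by family, whether a sub-arc merely degenerates to zero length or the whole family is infeasible, and thereby reproduces every entry of table~\ref{table3}.
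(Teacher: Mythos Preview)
Your proposal is correct and follows essentially the same approach as the paper: invoke Lemma~\ref{lemma1} to obtain $\lambda_\theta(0)\le 0$, $\lambda_\theta(t_f^*)\le 0$, split into the four sign subcases, cite Lemma~\ref{lemma2} for the $(0,0)$ case, and then specialize each Reeds--Shepp family in Table~\ref{RSpaths} to populate Table~\ref{table3}, whose union gives the seven listed types. The only difference is expository --- you spell out the Hamiltonian mechanism (sign of $\lambda_\theta$ dictates the turn letter, inflections occur only on $D^0$, cusps preserve the letter) explicitly, whereas the paper compresses this into ``following the same procedure in Lemma~\ref{lemma2}'' together with Facts~\ref{fact1}--\ref{fact4}.
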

\begin{proof}
Lemma \ref{lemma1} implies $\lambda_\theta(0) \leq 0$ and $\lambda_\theta(t_f^*) \leq 0$. Therefore, there are four possibilities for the adjoint variable $\lambda_\theta$ at the end points: \begin{enumerate}

\item $\lambda_\theta(0) = 0, \lambda_\theta(t_f^*) =0$,
\item $\lambda_\theta(0) < 0,\lambda_\theta(t_f^*) =0$,
\item  $\lambda_\theta(0) = 0,\lambda_\theta(t_f^*) <0$, or, 
\item  $\lambda_\theta(0) < 0,\lambda_\theta(t_f^*) <0$.
\end{enumerate} Solutions corresponding to each of the possibilities are listed in table \ref{table3} following the same procedure in Lemma \ref{lemma2}. Combining all the solutions in the table proves the Lemma.
\end{proof}

\section{Algorithm $Approx$ for solving the Path Planning Problem}\label{sec:Approx}
$Approx$ is explained first when the sequence of waypoints is specified. We will then modify $Approx$ to address the second variant. $Approx$ first finds a lower bound by relaxing the requirement that the arrival angle of the vehicle at any waypoint must be equal to the departure angle of the vehicle at the waypoint. It then converts the lower bounding solution to a feasible solution using a shortest path algorithm. Specifically, the following are the steps in $Approx$. 

\begin{enumerate}
\item Partition the available set of orientations at each waypoint into $k$ sectors or intervals of equal size. This step partitions $I_i$ into the set $\mathcal{I}_i:=\{[\phi_{i0},\phi_{i1}], [\phi_{i1},\phi_{i2}],\cdots,  [\phi_{i(k-1)},\phi_{ik}]\}$ where $\phi_{ij}:= \theta_i^{min} + \frac{j}{k}(\theta_i^{max}-\theta_i^{min})$ for $j=0,\cdots,k$. 
%\item For $i=1,\cdots,n-1$, do the following: For each pair of intervals in $\mathcal{I}_{s_i}$ and $\mathcal{I}_{s_{i+1}}$, solve the corresponding Reeds-Shepp interval problem between waypoints $s_i$ and $s_{i+1}$. %This step will solve $(n-1)k^2$ Reeds-Shepp interval problems.
\item Form an acyclic graph $G$ with $nk$ nodes. The node $v_{im}$ in $G$ represents the waypoint $s_i$ and the interval $[\phi_{s_i(m-1)},\phi_{s_im}]\in \mathcal{I}_{s_i}$ for $i=1,\cdots,n$, $m=1,\cdots,k$. An edge is present in $G$ if and only if it connects adjacent waypoints in the given sequence and any of its corresponding intervals. Let $cost(v_{im},v_{(i+1)l})$ denote the length of the shortest Reeds-Shepp interval path between waypoints $s_i$ and $s_{i+1}$ such that $\theta_{s_i}\in [\phi_{s_i(m-1)},\phi_{s_im}]$ and $\theta_{s_{i+1}}\in [\phi_{s_{i+1}(l-1)},\phi_{s_{i+1}l}]$. These lengths can be obtained from the result in Theorem \ref{theorem1}. 
\item {\it Compute a lower bounding solution:} Use Dijkstra's shortest path algorithm to find a path of minimum length from any node in $\{v_{1m}:m=1,\cdots,k\}$ to any node in $\{v_{nm}:m=1,\cdots,k\}$. Suppose an optimal solution is denoted by the sequence of nodes $(v_{1m^*_1},v_{2m^*_2},\cdots,v_{nm^*_n})$. Let the length of this solution be $Cost_{lb} := \sum_{i=1}^{n-1} cost(v_{im_i^*},v_{(i+1)m_{i+1}^*})$. 
\item {\it Compute a feasible (upper bounding) solution:} Let $\theta_{s_id}$ and $\theta_{s_{i+1}a}$ denote the departure and the arrival angles obtained from solving the Reeds-Shepp interval problem from $v_{im_i^*}$ to $v_{(i+1)m_{i+1}^*}$. It is possible that $\theta_{s_ia}\neq \theta_{s_id}$ for any $i=2,\cdots,n-1$. Dijkstra's shortest path algorithm is again used to choose exactly one of the two possible angles ($\theta_{s_ia}$ or $\theta_{s_id}$ for $i=2,\cdots,n-1$) at waypoints $s_2,\cdots,s_{n-1}$ such that the length of the resulting path is minimized. This provides a feasible solution to the path planning problem. 
\end{enumerate}

\begin{lemma}\label{lemmalb}
For any $k\geq 1$, the length of the path computed in step (3) of $Approx$ is a lower bound to the optimal length of the path planning problem when the sequence of waypoints to visit is given. 
\end{lemma}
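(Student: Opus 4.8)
The plan is to exhibit, for any optimal solution of the (unrelaxed) path planning problem, a corresponding path in the graph $G$ whose length is no larger than the optimum, and then to invoke the fact that $Cost_{lb}$ is the minimum over all such paths. This reduces the lemma to showing that replacing the optimal continuous angles by their enclosing sectors can only decrease, never increase, the leg-by-leg cost.

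First I would fix an optimal solution of the path planning problem for the given sequence, say with angles $\theta_{s_1}^*,\ldots,\theta_{s_n}^*$ where $\theta_{s_i}^*\in I_{s_i}$, and denote its length by $OPT=\sum_{i=1}^{n-1} d_{s_is_{i+1}}(\theta_{s_i}^*,\theta_{s_{i+1}}^*)$. Each $\theta_{s_i}^*$ lies in at least one of the $k$ equal sub-intervals of $I_{s_i}$; I would select one such sub-interval (say the $m_i$-th, breaking ties arbitrarily when $\theta_{s_i}^*$ falls on a sector boundary) and let $v_{im_i}$ be the associated node of $G$. Because consecutive nodes $v_{im_i}$ and $v_{(i+1)m_{i+1}}$ correspond to adjacent waypoints in the sequence, the node sequence $(v_{1m_1},v_{2m_2},\ldots,v_{nm_n})$ is a valid path in $G$.

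The key step is the leg-wise inequality. By definition, $cost(v_{im_i},v_{(i+1)m_{i+1}})$ is the length of the \emph{shortest} Reeds-Shepp interval path between $s_i$ and $s_{i+1}$ over all departure angles in the $m_i$-th sub-interval of $I_{s_i}$ and all arrival angles in the $m_{i+1}$-th sub-interval of $I_{s_{i+1}}$. Since $\theta_{s_i}^*$ and $\theta_{s_{i+1}}^*$ lie in exactly these two sub-intervals, the particular pair $(\theta_{s_i}^*,\theta_{s_{i+1}}^*)$ is feasible for that interval problem, so the minimized edge cost cannot exceed the realized length at this pair, i.e. $cost(v_{im_i},v_{(i+1)m_{i+1}}) \le d_{s_is_{i+1}}(\theta_{s_i}^*,\theta_{s_{i+1}}^*)$. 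Here it is essential that the interval problem optimizes the departure and arrival angles independently within their sub-intervals, which is precisely the relaxation that permits the arrival angle at an intermediate waypoint to differ from its departure angle. Summing this inequality over $i=1,\ldots,n-1$ shows the length of the path $(v_{1m_1},\ldots,v_{nm_n})$ is at most $OPT$, and since step (3) returns the minimum-length such path, $Cost_{lb}\le OPT$.

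I expect the only real subtlety to be conceptual rather than computational: one must recognize that each edge of $G$ already minimizes over a two-dimensional box of angle pairs, so the relaxed problem that $G$ encodes contains every feasible solution of the original problem as a special case (namely those with equal arrival and departure angles at each waypoint). Once this is made explicit, the lower-bound inequality follows immediately from monotonicity of the minimum, and the boundary-angle tie-breaking is the only point needing a word of care. No facts about the Reeds-Shepp geometry beyond the well-definedness and minimality of the distances $d_{ij}$ and of the edge costs, both supplied by Theorem \ref{theorem1}, are required.
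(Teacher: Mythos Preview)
Your proposal is correct and follows essentially the same approach as the paper: fix an optimal solution, locate each optimal angle in its enclosing sub-interval to obtain a feasible node sequence in $G$, use the definition of the edge cost as a minimum over the interval pair to bound each leg, and conclude by the optimality of $Cost_{lb}$ in $G$. The paper presents this as a two-line chain of inequalities, while you spell out the same logic more explicitly, but the substance is identical.
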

\begin{proof}
Let the orientation of the vehicle corresponding to an optimal solution at waypoint $s_i$ be denoted as $\theta_{s_i}^o$. Also, let the length of the segment of the optimal solution from $s_i$ to $s_{i+1}$ be denoted as $l^o(s_i,s_{i+1})$. Let $\theta_{s_i}^o$ belong to the interval $[\phi_{s_im_i^o-1},\phi_{s_im_i^o}]\in \mathcal{I}_{s_i}$ for $i=1,\cdots,n$. Then,
\begin{align}
Cost_{lb} &=\sum_{i=1}^{n-1} cost(v_{im_i^*},v_{(i+1)m_{i+1}^*}) \nonumber \\ & \leq \sum_{i=1}^{n-1} cost(v_{im_i^o},v_{(i+1)m_{i+1}^o}) \leq \sum_{i=1}^{n-1} l^o(s_i,s_{i+1}). \nonumber
\end{align}
\end{proof}

\begin{theorem}\label{theorem2}
The length of the feasible solution obtained by $Approx$ is at most equal to $Cost_{lb} + 3\rho \sum_{i=2}^{n-2} \frac{\theta_{s_i}^{max}-\theta_{s_i}^{min}}{k}$.
\end{theorem}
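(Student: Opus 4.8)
The plan is to upper-bound the solution returned by step (4) by exhibiting one admissible single-angle assignment and then invoking the optimality of the second Dijkstra run. First I would record the elementary but essential fact that, in the lower-bounding solution, both the arrival angle $\theta_{s_ia}$ (inherited from the edge $v_{(i-1)m_{i-1}^*}\!\to v_{im_i^*}$) and the departure angle $\theta_{s_id}$ (inherited from the edge $v_{im_i^*}\!\to v_{(i+1)m_{i+1}^*}$) at an intermediate waypoint $s_i$ are both constrained to the one sub-interval $[\phi_{s_i(m_i^*-1)},\phi_{s_im_i^*}]$ attached to the single node $v_{im_i^*}$. Hence $|\theta_{s_ia}-\theta_{s_id}|\le \phi_{s_im_i^*}-\phi_{s_i(m_i^*-1)}=\frac{\theta_{s_i}^{max}-\theta_{s_i}^{min}}{k}$, which is exactly the per-waypoint angular budget appearing in the bound. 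At the endpoints $s_1$ and $s_n$ there is only a departure (respectively arrival) heading, so no mismatch can occur there.

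The crux is a \emph{rotate-in-place} estimate: for the Reeds-Shepp vehicle there is a closed maneuver that returns to its starting position $(x,y)$ while changing the heading by a prescribed $\delta$, of length at most $3\rho|\delta|$. I would prove this by explicit construction. A direct integration of \eqref{eq:kinematics} shows a two-arc maneuver cannot close up with a nonzero net rotation (for an $l^+r^-$ pair the net $x$-displacement equals $\rho\sin\delta$, forcing $\delta=0$), so at least three pieces are required; this is the origin of the constant $3$. I would then exhibit a symmetric three-arc maneuver of type $C|C|C$ with net heading change $\delta$, solve the two scalar closure equations $x(t_f)=x$ and $y(t_f)=y$ for its arc angles, and certify that the resulting total arc length does not exceed $3\rho|\delta|$. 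Small-time controllability of the Reeds-Shepp vehicle (noted in the introduction) guarantees such a closed maneuver exists; establishing the explicit length bound $3\rho|\delta|$ uniformly in $\delta$ is the step I expect to be the main obstacle.

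With the rotate-in-place estimate available, the assembly is a telescoping triangle-inequality argument. Fix the admissible assignment $\psi_i:=\theta_{s_id}$ at every intermediate waypoint, together with $\psi_1:=\theta_{s_1d}$ and $\psi_n:=\theta_{s_na}$. For each segment I bound the shortest interval path with these chosen headings by the concatenation that rotates in place at $s_i$ from $\psi_i$ to $\theta_{s_id}$, follows the lower-bounding segment from $(x_{s_i},y_{s_i},\theta_{s_id})$ to $(x_{s_{i+1}},y_{s_{i+1}},\theta_{s_{i+1}a})$, and rotates in place at $s_{i+1}$ from $\theta_{s_{i+1}a}$ to $\psi_{i+1}$; that is
\begin{align}
d_{s_is_{i+1}}(\psi_i,\psi_{i+1}) \le r(s_i) + cost(v_{im_i^*},v_{(i+1)m_{i+1}^*}) + r(s_{i+1}), \nonumber
\end{align}
where $r(\cdot)$ denotes the relevant rotate-in-place length. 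Summing over $i=1,\dots,n-1$, the middle terms telescope into $Cost_{lb}$, while the rotation terms collect at each waypoint the cost of reconciling its arrival and departure headings; the contributions at $s_1$ and $s_n$ vanish because those headings already equal the fixed $\psi_1,\psi_n$. Each remaining intermediate waypoint $s_i$ therefore contributes at most $3\rho\,|\theta_{s_ia}-\theta_{s_id}| \le 3\rho\,\tfrac{\theta_{s_i}^{max}-\theta_{s_i}^{min}}{k}$.

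Finally, since the second Dijkstra run in step (4) returns the minimum-length single-angle assignment, the feasible solution actually produced by $Approx$ is no longer than the particular assignment just bounded. Collecting the terms yields $Cost_{lb}+3\rho\sum_i \frac{\theta_{s_i}^{max}-\theta_{s_i}^{min}}{k}$, the sum running over the intermediate waypoints, which is the claimed estimate. Once the rotate-in-place lemma is in hand, the telescoping and the appeal to Dijkstra optimality are routine; it is the uniform length certificate for the in-place rotation that carries all the difficulty.
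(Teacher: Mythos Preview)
Your overall architecture matches the paper's proof exactly: bound the mismatch $|\theta_{s_ia}-\theta_{s_id}|$ by the sub-interval width $(\theta_{s_i}^{max}-\theta_{s_i}^{min})/k$, invoke a rotate-in-place estimate $dist_{rs}(\delta)\le 3\rho|\delta|$ at each intermediate waypoint, and use the optimality of the second Dijkstra pass to conclude. Your telescoping write-up is in fact more explicit than the paper, which simply asserts that the feasible solution is bounded by $Cost_{lb}+\sum_{i=2}^{n-1} dist_{rs}(\Delta\theta_i)$ and defers the rotate-in-place bound to the appendix.

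The one place you diverge is the construction for the rotate-in-place lemma, and here you are making it harder than it is. You propose a symmetric $C|C|C$ maneuver and flag the length certificate as the main obstacle. The paper instead uses a very concrete four-piece path: first an arc of angle $\Delta\theta$ (length $\rho\Delta\theta$) that achieves the heading change, followed by an arc--segment--arc of total length $2\rho\sin(\Delta\theta/2)+\rho\Delta\theta$ that cancels the displacement. The total is $2\rho\Delta\theta+2\rho\sin(\Delta\theta/2)\le 3\rho\Delta\theta$, a one-line inequality. So the lemma you correctly identified as ``carrying all the difficulty'' is in fact almost immediate once you pick the right feasible path; there is no need to solve closure equations for a $C|C|C$ or to appeal to small-time controllability.
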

\begin{proof}
As explained in step (4) of Approx, the arrival and departure angles obtained from the lower bounding solution at waypoint $s_i$ may not be the same. The absolute difference between these two angles can at most be equal to $\Delta \theta_i := \frac{\theta_{s_i}^{max}-\theta_{s_i}^{min}}{k}$. Therefore, the length of the feasible solution obtained by $Approx$ must be upper bounded by $Cost_{lb} + \sum_{i=2}^{n-1} dist_{rs}(\Delta \theta_i)$ where $dist_{rs}$ denotes the length of the shortest Reeds-Shepp path between $(0,0,\Delta \theta_i)$ and $(0,0,0)$. In the appendix, we prove that $dist_{rs} (\Delta \theta_i)\leq \frac{3\rho\Delta \theta_i}{2}$. Hence proved.
\end{proof}

\subsection{Modification to $Approx$ to address the second variant}
In step (2) of $Approx$, instead of computing the Reeds-Shepp interval lengths only between any two adjacent waypoints and their corresponding intervals, we need to compute $cost(v_{im},v_{jl})$ for all $i,j=1,\cdots,n, i\neq j$, and $m,l\in 1,\cdots,k$. The lower bounding problem (step (3)) would aim to find a sequence of waypoints to visit $(s_1,\cdots,s_n)$ and choose exactly one interval at each waypoint (denote the chosen interval at $s_i$ to be $[\phi_{s_i(m_i-1)},\phi_{s_im_i}]$ for $i=1,\cdots,n$) such that $\sum_{i=1}^{n-1} cost(v_{im_i},v_{im_{i+1}}) + cost(v_{nm_n},v_{1m_{1}})$ is minimized. This is a One-in-a-set TSP which can be transformed into a single TSP\cite{TSPbook} and solved using CONCORDE \cite{Applegate:2007} software. The lower bounding solution provides a sequence of waypoints to visit but the arrival and departure angles at each waypoint may not be equal; to make it feasible, Dijkstra's algorithm is used to choose one of the two angles at each of the waypoints (Step 4). The following upper bound on the length of the feasible solution obtained can be proved using exactly the same procedure outlined in Lemma \ref{lemmalb} and Theorem \ref{theorem2}. 

\begin{corollary}
The length of the feasible solution obtained by the modified $Approx$ is at most equal to $Cost^{tsp}_{lb} + 3\rho \sum_{i=1}^{n} \frac{\theta_{s_i}^{max}-\theta_{s_i}^{min}}{k}$ where $Cost^{tsp}_{lb}$ denotes the length of the lower bounding solution to the problem when the sequence of waypoints is not specified.
\end{corollary}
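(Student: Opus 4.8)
The plan is to reuse verbatim the two-part structure behind Lemma \ref{lemmalb} and Theorem \ref{theorem2}, adapting each part to the cyclic, sequence-free setting. First I would argue that $Cost^{tsp}_{lb}$ is a genuine lower bound on the optimum of the second variant (the analog of Lemma \ref{lemmalb}), and then I would bound the extra cost incurred when the lower bounding tour is made feasible (the analog of Theorem \ref{theorem2}).

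For the lower bound, I would take an optimal solution to the second variant, with visiting sequence $(s_1,\ldots,s_n)$ and optimal orientations $\theta_{s_i}^o\in I_{s_i}$. Each $\theta_{s_i}^o$ lies in exactly one subinterval of $\mathcal{I}_{s_i}$, and selecting the corresponding node $v_{im_i^o}$ yields one feasible selection for the One-in-a-set TSP. Since $cost(v_{im},v_{jl})$ is defined as the shortest Reeds-Shepp length over the entire pair of intervals, it is no larger than the length of the optimal segment between $s_i$ and $s_{i+1}$ evaluated at the fixed angles $\theta_{s_i}^o,\theta_{s_{i+1}}^o$. Because CONCORDE minimizes over all sequences and all interval selections, $Cost^{tsp}_{lb}$ is at most the cost of this particular selection, which is in turn at most the optimal cyclic tour length; this is exactly the inequality chain of Lemma \ref{lemmalb} with the outer minimization now ranging over sequences as well.

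For the upper bound, I would start from the lower bounding tour. At every waypoint $s_i$ the arrival angle $\theta_{s_i a}$ and the departure angle $\theta_{s_i d}$ lie in the same subinterval, so $|\theta_{s_i a}-\theta_{s_i d}|\le \Delta\theta_i:=\frac{\theta_{s_i}^{max}-\theta_{s_i}^{min}}{k}$. A feasible tour is obtained by keeping each arrival angle and inserting an in-place Reeds-Shepp maneuver that rotates from $\theta_{s_i a}$ to $\theta_{s_i d}$ at the same position, whose length is $dist_{rs}(\Delta\theta_i)$. Since the Dijkstra step of the modified $Approx$ selects the single angle at each waypoint optimally, the tour it returns is no longer than this explicit patched construction, giving a total of at most $Cost^{tsp}_{lb}+\sum_{i=1}^{n} dist_{rs}(\Delta\theta_i)$. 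Invoking the appendix estimate $dist_{rs}(\Delta\theta_i)\le \frac{3\rho\Delta\theta_i}{2}$ and summing then yields the stated bound.

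The only structural point requiring care — and the step I expect to be the main obstacle — is the closed-cycle topology. In the sequence-specified variant the endpoints $s_1$ and $s_n$ carry only a departure or only an arrival angle, so no reconciliation is needed there; in the second variant the tour returns to $s_1$, so $s_1$ is itself an intermediate vertex of the cycle and its two angles must also be patched. Consequently each of the $n$ waypoints contributes a term, and the summation runs over $i=1,\ldots,n$ rather than over interior indices only, producing the $\sum_{i=1}^{n}$ appearing in the statement. Beyond replacing the acyclic shortest-path lower bound by the One-in-a-set TSP lower bound and verifying that the lower-bound inequality survives the optimization over sequences, no new estimates are needed.
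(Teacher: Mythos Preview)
Your proposal is correct and follows exactly the route the paper indicates: it explicitly says the corollary ``can be proved using exactly the same procedure outlined in Lemma \ref{lemmalb} and Theorem \ref{theorem2}'', and you carry this out with the appropriate adjustments (minimizing over sequences in the lower-bound step, and summing over all $n$ waypoints because the tour is cyclic). The only cosmetic point is that the appendix actually establishes $dist_{rs}(\Delta\theta)\le 3\rho\,\Delta\theta$ rather than the $\tfrac{3\rho\Delta\theta}{2}$ quoted in the proof of Theorem \ref{theorem2}; either constant yields the stated corollary, so your argument goes through unchanged.
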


\section{Simulation Results}\label{sec:sim}

Twenty five problem instances were generated with each instance having 20 waypoints. The points were sampled from an area of size $1000 \times 1000$ units. The minimum turning radius of the Reeds-Shepp vehicle was assumed to be $1/10^{th}$ of the side length of the area, $i.e.$, 100 units. For each waypoint, the vehicle was restricted to visit the waypoint within a field of view of $\frac{\pi}{2}$ radians. For $i=1,\cdots,n$, $\theta_i^{min}$ was randomly chosen in the interval $[0,\frac{3\pi}{2}]$ and $\theta_i^{max}$ was set to $\theta_i^{min}+\frac{\pi}{2}$. All the algorithms were coded in MATLAB and the computations
were run on MacBook Pro (Intel Core i7 Processor @2.8 GHz, 16 GB RAM).

For the first variant of the path planning problem, a sequence for each instance is obtained by solving the corresponding Euclidean TSP. $Approx$ uses this sequence as an input to find a feasible solution. The deviation of this solution (in \%) from the lower bound for each instance is $100 \times \frac{Cost_{f}-Cost_{lb}}{Cost_{lb}}$ where $Cost_f$ is the length of the  feasible solution produced by $Approx$. The a-priori (or theoretical) deviation of the upper bound (in \%) from the lower bound can also be computed using Theorem \ref{theorem2} as $100 \times \frac{3\rho}{Cost_{lb}} \sum_{i=2}^{n-2} \frac{\Delta\theta_i}{k}$. All the deviations and bounds for the 25 instances for different levels of discretization at each waypoint is shown in Table \ref{tab:RSSeq}. Overall, for $k=16$, the average deviation of all the feasible solutions obtained by $Approx$ with respect to the lower bounds is 1\%. Note that the theoretical deviations are mostly 10 times larger than the actual deviations of the solutions obtained by $Approx$. This is generally observed with a-priori theoretical guarantees \cite{TSPbook} which specify the worst-case bounds for any instance of the problem. 

Using the modified version of $Approx$, feasible solutions and its corresponding deviation from the lower bounding solutions can also be obtained for the second variant, and are shown in Table \ref{tab:RSTSP}. There was no noticeable difference in the deviations of the feasible solutions for both the variants of the problem.  Computationally, the main difference between solving the two variants lies in step (3) of $Approx$ where a One-in-a-set TSP is solved when the sequence is not given as compared to solving a shortest path problem when the sequence is given. The average computation times needed for solving the 25 instances with $k=4,8$ and $16$ were approximately 44, 182 and 743 seconds respectively when the sequence for the way-points are given, and 48, 205 and 1016 seconds respectively for the variant with the sequence of way points not specified. Sample lower bounding and feasible solutions are shown for the two variants of the problem in Figs. \ref{fig:RSSeq} and \ref{fig:RSTSP}.  

\begin{table*}[htbp]
  \centering
  \caption{Results obtained for the Path Planning problem when the sequence of waypoints to be visited is given}
    \begin{tabular}{ccccccccccccc}
    \toprule
    \multicolumn{1}{c|}{\textbf{Instance}} & \multicolumn{3}{c|}{\textbf{Lower Bounds}} & \multicolumn{3}{c|}{\textbf{Upper Bounds}} & \multicolumn{3}{c|}{\textbf{\%Deviation}} & \multicolumn{3}{c}{\textbf{\% (Theoretical) Deviation}} \\
    \multicolumn{1}{c|}{\textbf{\#}} & \multicolumn{1}{c|}{\textit{k=4}} & \multicolumn{1}{c|}{\textit{k=8}} & \multicolumn{1}{c|}{\textit{k=16}} & \multicolumn{1}{c|}{\textit{k=4}} & \multicolumn{1}{c|}{\textit{k=8}} & \multicolumn{1}{c|}{\textit{k=16}} & \multicolumn{1}{c|}{\textit{k=4}} & \multicolumn{1}{c|}{\textit{k=8}} & \multicolumn{1}{c|}{\textit{k=16}} & \multicolumn{1}{c|}{\textit{k=4}} & \multicolumn{1}{c|}{\textit{k=8}} & \textit{k=16} \\
    \midrule
    1     & 3804.79 & 3831.99 & 3851.74 & 3884.38 & 3881.98 & 3879.56 & 2.09  & 1.30  & 0.72  & 55.73 & 27.67 & 13.76 \\
    2     & 4299.66 & 4325.71 & 4344.54 & 4397.87 & 4372.50 & 4369.62 & 2.28  & 1.08  & 0.58  & 49.32 & 24.51 & 12.20 \\
    3     & 4864.10 & 4910.81 & 4939.48 & 4995.13 & 4977.18 & 4974.11 & 2.69  & 1.35  & 0.70  & 43.60 & 21.59 & 10.73 \\
    4     & 4300.68 & 4352.07 & 4391.90 & 4450.17 & 4448.82 & 4436.74 & 3.48  & 2.22  & 1.02  & 49.31 & 24.36 & 12.07 \\
    5     & 4133.21 & 4160.56 & 4175.54 & 4228.62 & 4204.56 & 4203.97 & 2.31  & 1.06  & 0.68  & 51.31 & 25.48 & 12.70 \\
    6     & 4487.35 & 4523.70 & 4547.33 & 4672.95 & 4621.83 & 4580.61 & 4.14  & 2.17  & 0.73  & 47.26 & 23.44 & 11.66 \\
    7     & 4147.91 & 4181.98 & 4200.59 & 4241.13 & 4246.69 & 4233.13 & 2.25  & 1.55  & 0.77  & 51.12 & 25.35 & 12.62 \\
    8     & 4062.19 & 4106.27 & 4128.25 & 4173.11 & 4171.89 & 4158.41 & 2.73  & 1.60  & 0.73  & 52.20 & 25.82 & 12.84 \\
    9     & 4505.07 & 4562.91 & 4597.80 & 4658.69 & 4640.76 & 4637.82 & 3.41  & 1.71  & 0.87  & 47.07 & 23.24 & 11.53 \\
    10    & 4429.36 & 4514.41 & 4558.47 & 4647.27 & 4615.35 & 4610.73 & 4.92  & 2.24  & 1.15  & 47.88 & 23.49 & 11.63 \\
    11    & 4027.89 & 4064.10 & 4079.05 & 4200.24 & 4127.74 & 4109.55 & 4.28  & 1.57  & 0.75  & 52.65 & 26.09 & 13.00 \\
    12    & 4037.90 & 4112.40 & 4156.99 & 4252.12 & 4216.20 & 4210.47 & 5.31  & 2.52  & 1.29  & 52.52 & 25.78 & 12.75 \\
    13    & 4173.03 & 4215.56 & 4239.39 & 4319.83 & 4301.63 & 4281.75 & 3.52  & 2.04  & 1.00  & 50.82 & 25.15 & 12.51 \\
    14    & 3801.39 & 3889.95 & 3933.13 & 4027.25 & 4012.58 & 3994.65 & 5.94  & 3.15  & 1.56  & 55.78 & 27.26 & 13.48 \\
    15    & 3839.41 & 3887.76 & 3920.50 & 3973.89 & 3964.84 & 3959.78 & 3.50  & 1.98  & 1.00  & 55.23 & 27.27 & 13.52 \\
    16    & 4633.02 & 4663.19 & 4675.75 & 4717.49 & 4696.20 & 4693.85 & 1.82  & 0.71  & 0.39  & 45.77 & 22.74 & 11.34 \\
    17    & 3806.29 & 3892.06 & 3953.78 & 4047.83 & 4029.06 & 4021.78 & 6.35  & 3.52  & 1.72  & 55.71 & 27.24 & 13.41 \\
    18    & 4290.46 & 4325.27 & 4351.14 & 4418.13 & 4397.81 & 4396.65 & 2.98  & 1.68  & 1.05  & 49.43 & 24.51 & 12.18 \\
    19    & 4790.18 & 4804.60 & 4810.70 & 4824.55 & 4819.17 & 4817.41 & 0.72  & 0.30  & 0.14  & 44.27 & 22.07 & 11.02 \\
    20    & 4145.96 & 4161.25 & 4167.01 & 4190.37 & 4183.32 & 4178.90 & 1.07  & 0.53  & 0.29  & 51.15 & 25.48 & 12.72 \\
    21    & 3773.39 & 3830.21 & 3849.54 & 3924.78 & 3881.04 & 3882.57 & 4.01  & 1.33  & 0.86  & 56.20 & 27.68 & 13.77 \\
    22    & 4033.64 & 4087.62 & 4117.23 & 4203.93 & 4171.91 & 4164.08 & 4.22  & 2.06  & 1.14  & 52.57 & 25.94 & 12.88 \\
    23    & 4117.51 & 4180.07 & 4213.85 & 4277.71 & 4260.20 & 4257.21 & 3.89  & 1.92  & 1.03  & 51.50 & 25.37 & 12.58 \\
    24    & 4748.48 & 4796.09 & 4834.73 & 4892.20 & 4888.68 & 4884.99 & 3.03  & 1.93  & 1.04  & 44.66 & 22.11 & 10.97 \\
    25    & 4612.48 & 4639.93 & 4659.99 & 4704.55 & 4692.71 & 4688.99 & 2.00  & 1.14  & 0.62  & 45.97 & 22.85 & 11.38 \\
    \bottomrule
    \end{tabular}%
  \label{tab:RSSeq}%
\end{table*}%

{{\small
\begin{table*}[htbp]
  \centering
  \caption{Results obtained for the Path Planning problem when the sequence of waypoints to be visited is not given}
    \begin{tabular}{ccccccccccccc}
    \toprule
    \multicolumn{1}{c|}{\textbf{Instance}} & \multicolumn{3}{c|}{\textbf{Lower Bounds}} & \multicolumn{3}{c|}{\textbf{Upper Bounds}} & \multicolumn{3}{c|}{\textbf{\% Deviation}} & \multicolumn{3}{c}{\textbf{\% (Theoretical) Deviation}} \\
    \multicolumn{1}{c|}{\textbf{\#}} & \multicolumn{1}{c|}{\textit{k=4}} & \multicolumn{1}{c|}{\textit{k=8}} & \multicolumn{1}{c|}{\textit{k=16}} & \multicolumn{1}{c|}{\textit{k=4}} & \multicolumn{1}{c|}{\textit{k=8}} & \multicolumn{1}{c|}{\textit{k=16}} & \multicolumn{1}{c|}{\textit{k=4}} & \multicolumn{1}{c|}{\textit{k=8}} & \multicolumn{1}{c|}{\textit{k=16}} & \multicolumn{1}{c|}{\textit{k=4}} & \multicolumn{1}{c|}{\textit{k=8}} & \textit{k=16} \\
    \midrule
    1     & 4057.72 & 4092.57 & 4113.42 & 4170.19 & 4151.74 & 4144.99 & 2.77  & 1.45  & 0.77  & 55.16 & 27.35 & 13.60 \\
    2     & 4246.45 & 4321.11 & 4364.33 & 4436.61 & 4422.07 & 4418.51 & 4.48  & 2.34  & 1.24  & 52.71 & 25.90 & 12.82 \\
    3     & 4845.76 & 4911.56 & 4959.93 & 5025.39 & 5018.87 & 5016.32 & 3.71  & 2.18  & 1.14  & 46.19 & 22.79 & 11.28 \\
    4     & 4313.29 & 4394.21 & 4455.24 & 4559.68 & 4544.06 & 4540.38 & 5.71  & 3.41  & 1.91  & 51.90 & 25.47 & 12.56 \\
    5     & 4194.38 & 4250.03 & 4282.38 & 4366.42 & 4348.02 & 4325.92 & 4.10  & 2.31  & 1.02  & 53.37 & 26.33 & 13.07 \\
    6     & 4485.86 & 4539.59 & 4574.05 & 4660.59 & 4637.82 & 4598.73 & 3.90  & 2.16  & 0.54  & 49.90 & 24.65 & 12.23 \\
    7     & 4930.22 & 4983.02 & 5009.58 & 5088.47 & 5059.55 & 5045.38 & 3.21  & 1.54  & 0.71  & 45.40 & 22.46 & 11.17 \\
    8     & 4235.85 & 4309.33 & 4340.73 & 4414.70 & 4395.55 & 4378.21 & 4.22  & 2.00  & 0.86  & 52.84 & 25.97 & 12.89 \\
    9     & 4732.49 & 4791.19 & 4810.31 & 4886.03 & 4872.67 & 4834.40 & 3.24  & 1.70  & 0.50  & 47.30 & 23.36 & 11.63 \\
    10    & 4672.81 & 4718.06 & 4737.27 & 4871.78 & 4778.24 & 4768.97 & 4.26  & 1.28  & 0.67  & 47.90 & 23.72 & 11.81 \\
    11    & 4152.42 & 4203.99 & 4237.42 & 4331.90 & 4293.83 & 4285.10 & 4.32  & 2.14  & 1.13  & 53.91 & 26.62 & 13.21 \\
    12    & 4072.57 & 4149.76 & 4186.21 & 4294.42 & 4262.67 & 4247.17 & 5.45  & 2.72  & 1.46  & 54.96 & 26.97 & 13.37 \\
    13    & 4289.76 & 4309.93 & 4324.91 & 4401.09 & 4360.42 & 4354.72 & 2.60  & 1.17  & 0.69  & 52.18 & 25.97 & 12.94 \\
    14    & 3723.66 & 3791.31 & 3832.73 & 3930.67 & 3903.63 & 3892.84 & 5.56  & 2.96  & 1.57  & 60.11 & 29.52 & 14.60 \\
    15    & 3793.96 & 3823.38 & 3846.70 & 3934.15 & 3891.93 & 3882.75 & 3.70  & 1.79  & 0.94  & 59.00 & 29.27 & 14.55 \\
    16    & 4779.49 & 4808.88 & 4827.04 & 4875.53 & 4852.86 & 4851.70 & 2.01  & 0.91  & 0.51  & 46.83 & 23.27 & 11.59 \\
    17    & 3947.45 & 4064.96 & 4130.38 & 4247.01 & 4219.57 & 4205.13 & 7.59  & 3.80  & 1.81  & 56.70 & 27.53 & 13.55 \\
    18    & 4503.20 & 4528.22 & 4551.88 & 4609.50 & 4595.95 & 4590.49 & 2.36  & 1.50  & 0.85  & 49.71 & 24.72 & 12.29 \\
    19    & 4690.94 & 4751.28 & 4786.94 & 4879.54 & 4832.68 & 4828.24 & 4.02  & 1.71  & 0.86  & 47.72 & 23.56 & 11.69 \\
    20    & 4210.87 & 4244.68 & 4265.52 & 4315.18 & 4301.56 & 4299.93 & 2.48  & 1.34  & 0.81  & 53.16 & 26.37 & 13.12 \\
    21    & 4261.76 & 4316.41 & 4347.47 & 4405.77 & 4397.88 & 4389.17 & 3.38  & 1.89  & 0.96  & 52.52 & 25.93 & 12.87 \\
    22    & 4188.74 & 4227.72 & 4250.21 & 4362.53 & 4294.34 & 4287.60 & 4.15  & 1.58  & 0.88  & 53.44 & 26.47 & 13.17 \\
    23    & 4326.04 & 4389.36 & 4424.83 & 4486.75 & 4467.78 & 4468.90 & 3.71  & 1.79  & 1.00  & 51.74 & 25.50 & 12.65 \\
    24    & 4846.61 & 4889.53 & 4931.16 & 4989.72 & 4984.38 & 4983.70 & 2.95  & 1.94  & 1.07  & 46.18 & 22.89 & 11.35 \\
    25    & 4626.99 & 4672.73 & 4688.99 & 4777.01 & 4751.44 & 4709.62 & 3.24  & 1.68  & 0.44  & 48.38 & 23.95 & 11.93 \\
    \bottomrule
    \end{tabular}%
  \label{tab:RSTSP}%
\end{table*}%
}}

\begin{figure}[htb!]
\centering{}
\includegraphics[width=0.9\columnwidth]{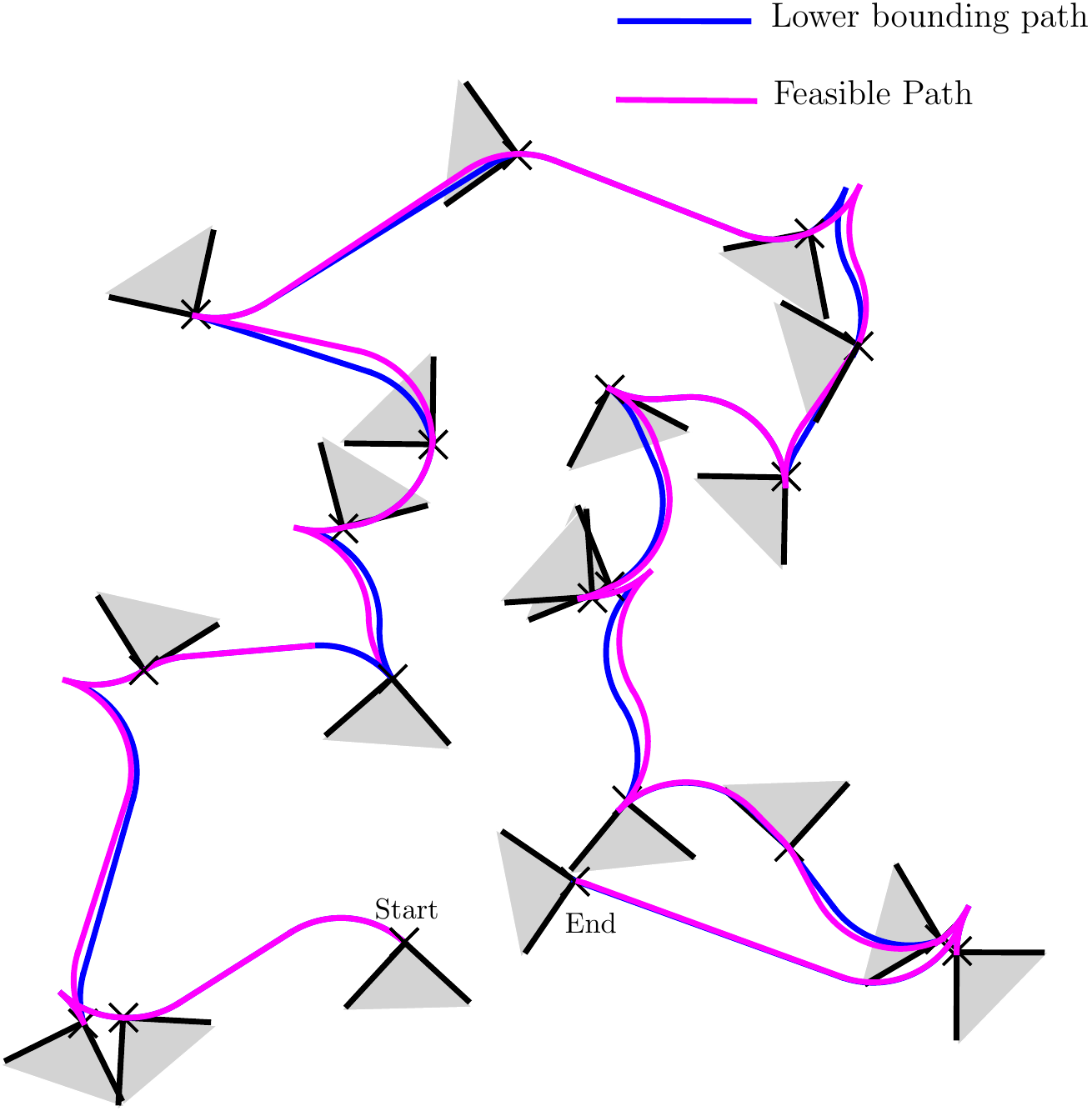}
\caption{Solutions when the sequence of waypoints to visit is given.}
\label{fig:RSSeq}
\end{figure}

\begin{figure}[htb!]
\centering{}
\includegraphics[width=0.9\columnwidth]{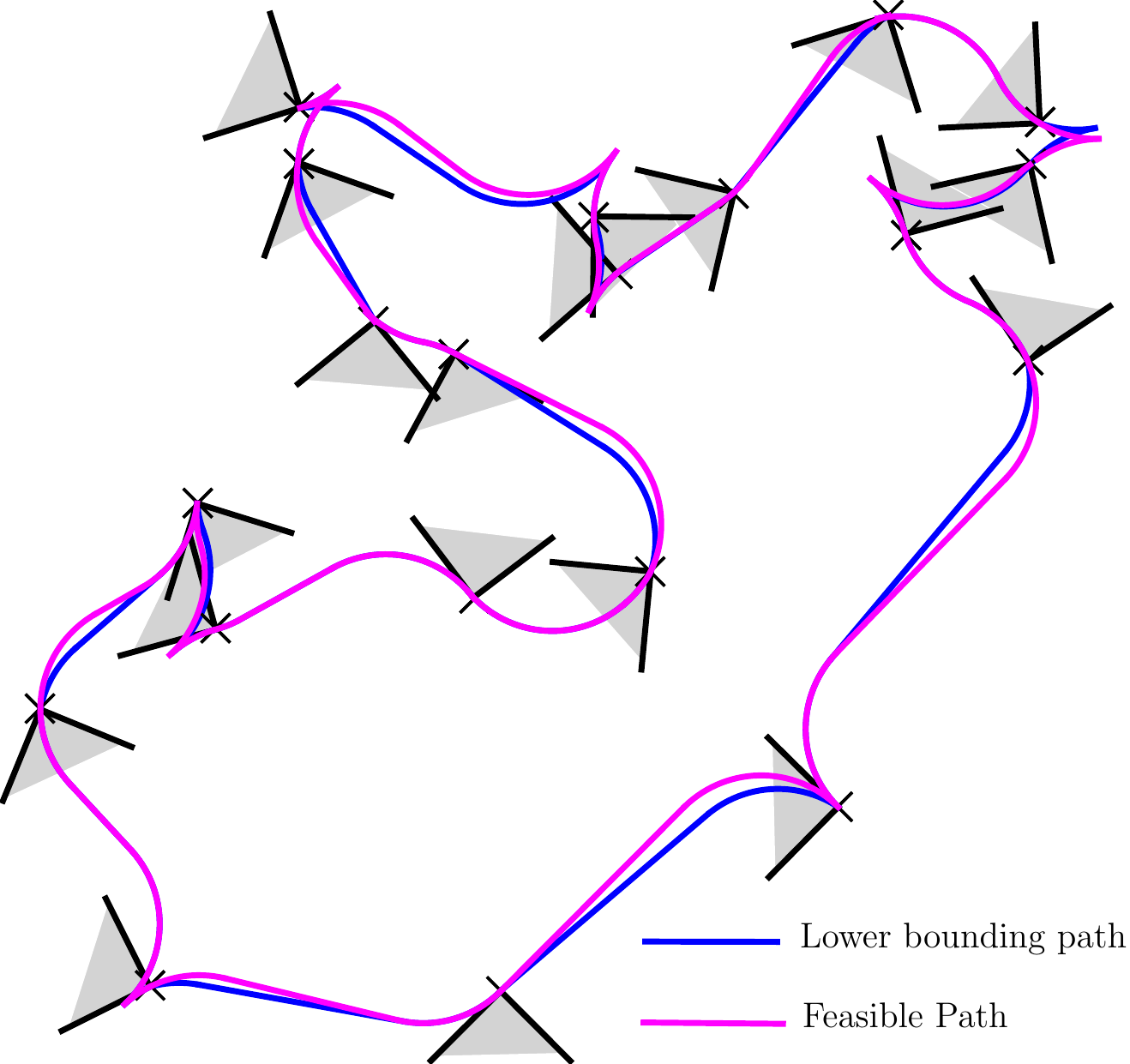}
\caption{Solutions when the sequence to visit is not specified.}
\label{fig:RSTSP}
\end{figure}

\section{Conclusions}
This article presented a path planning algorithm for a car-like robot visiting a set of waypoints with field of view constraints. The problem is first solved for two waypoints and the solutions to the two-point problem are generalized to handle multiple waypoints using a relaxation procedure. Theoretical bounds are also obtained on the quality of the feasible solutions produced by the proposed algorithm. The procedure used in this article is generic and can be applied to any mobile robot as long as one can solve the point to point problem for the robot. Future work can address problems with multiple vehicles, fuel constraints on vehicles, uncertainty in the location of waypoints and motion in 3D. 

\bibliographystyle{IEEEtran}
\bibliography{car}

% Generated by IEEEtran.bst, version: 1.14 (2015/08/26)
\begin{thebibliography}{10}
\providecommand{\url}[1]{#1}
\csname url@samestyle\endcsname
\providecommand{\newblock}{\relax}
\providecommand{\bibinfo}[2]{#2}
\providecommand{\BIBentrySTDinterwordspacing}{\spaceskip=0pt\relax}
\providecommand{\BIBentryALTinterwordstretchfactor}{4}
\providecommand{\BIBentryALTinterwordspacing}{\spaceskip=\fontdimen2\font plus
\BIBentryALTinterwordstretchfactor\fontdimen3\font minus
  \fontdimen4\font\relax}
\providecommand{\BIBforeignlanguage}[2]{{%
\expandafter\ifx\csname l@#1\endcsname\relax
\typeout{** WARNING: IEEEtran.bst: No hyphenation pattern has been}%
\typeout{** loaded for the language `#1'. Using the pattern for}%
\typeout{** the default language instead.}%
\else
\language=\csname l@#1\endcsname
\fi
#2}}
\providecommand{\BIBdecl}{\relax}
\BIBdecl

\bibitem{reeds1990}
\BIBentryALTinterwordspacing
J.~A. Reeds and L.~A. Shepp, ``Optimal paths for a car that goes both forwards
  and backwards.'' \emph{Pacific J. Math.}, vol. 145, no.~2, pp. 367--393,
  1990. [Online]. Available:
  \url{https://projecteuclid.org:443/euclid.pjm/1102645450}
\BIBentrySTDinterwordspacing

\bibitem{boissonat}
\BIBentryALTinterwordspacing
J.-D. Boissonnat, A.~C{\'e}r{\'e}zo, and J.~Leblond,
  ``\BIBforeignlanguage{English}{Shortest paths of bounded curvature in the
  plane},'' \emph{\BIBforeignlanguage{English}{Journal of Intelligent and
  Robotic Systems}}, vol.~11, no. 1-2, pp. 5--20, 1994. [Online]. Available:
  \url{http://dx.doi.org/10.1007/BF01258291}
\BIBentrySTDinterwordspacing

\bibitem{Sussmann91shortestpaths}
H.~J. Sussmann and G.~Tang, ``Shortest paths for the reeds-shepp car: A worked
  out example of the use of geometric techniques in nonlinear optimal
  control.'' Tech. Rep., 1991.

\bibitem{laumond96}
P.~Soueres and J.~. Laumond, ``Shortest paths synthesis for a car-like robot,''
  \emph{IEEE Transactions on Automatic Control}, vol.~41, no.~5, pp. 672--688,
  May 1996.

\bibitem{Dubins1957}
L.E.Dubins, ``On curves of minimal length with a constraint on average
  curvature, and with prescribed initial and terminal positions and tangents,''
  \emph{American Journal of Mathematics}, vol.~79, no.~3, pp. 487--516, 1957.

\bibitem{Balkcom-Mason}
\BIBentryALTinterwordspacing
D.~J. Balkcom and M.~T. Mason, ``Time optimal trajectories for bounded velocity
  differential drive vehicles,'' \emph{The International Journal of Robotics
  Research}, vol.~21, no.~3, pp. 199--217, 2002. [Online]. Available:
  \url{https://doi.org/10.1177/027836402320556403}
\BIBentrySTDinterwordspacing

\bibitem{chitsaz2006}
H.~Chitsaz, S.~M. LaValle, D.~J. Balkcom, and M.~T. Mason, ``Minimum
  wheel-rotation paths for differential-drive mobile robots,'' in
  \emph{Proceedings 2006 IEEE International Conference on Robotics and
  Automation, 2006. ICRA 2006.}, May 2006, pp. 1616--1623.

\bibitem{chitsaz-lavalle}
H.~Chitsaz and S.~M. LaValle, ``Time-optimal paths for a dubins airplane,'' in
  \emph{2007 46th IEEE Conference on Decision and Control}, Dec 2007, pp.
  2379--2384.

\bibitem{Balkcom2002}
D.~J. Balkcom and M.~T. Mason, ``Extremal trajectories for bounded velocity
  mobile robots,'' in \emph{Proceedings 2002 IEEE International Conference on
  Robotics and Automation (Cat. No.02CH37292)}, vol.~2, May 2002, pp.
  1747--1752 vol.2.

\bibitem{furtuna}
A.~A. Furtuna, D.~J. Balkcom, H.~Chitsaz, and P.~Kavathekar, ``Generalizing the
  dubins and reeds-shepp cars: Fastest paths for bounded-velocity mobile
  robots,'' in \emph{2008 IEEE International Conference on Robotics and
  Automation}, May 2008, pp. 2533--2539.

\bibitem{Dolinkaya}
\BIBentryALTinterwordspacing
I.~S. Dolinskaya and A.~Maggiar, ``Time-optimal trajectories with bounded
  curvature in anisotropic media,'' \emph{The International Journal of Robotics
  Research}, vol.~31, no.~14, pp. 1761--1793, 2012. [Online]. Available:
  \url{https://doi.org/10.1177/0278364912458464}
\BIBentrySTDinterwordspacing

\bibitem{Applegate}
D.~L. Applegate, R.~E. Bixby, V.~Chvatal, and W.~J. Cook, \emph{The Traveling
  Salesman Problem: A Computational Study (Princeton Series in Applied
  Mathematics)}.\hskip 1em plus 0.5em minus 0.4em\relax Princeton, NJ, USA:
  Princeton University Press, 2007.

\bibitem{TSPbook}
\BIBentryALTinterwordspacing
G.~Gutin and A.~P. Punnen, Eds., \emph{The traveling salesman problem and its
  variations}, ser. Combinatorial optimization.\hskip 1em plus 0.5em minus
  0.4em\relax Dordrecht, London: Kluwer Academic, 2002. [Online]. Available:
  \url{http://opac.inria.fr/record=b1120710}
\BIBentrySTDinterwordspacing

\bibitem{LeeDubins}
J.-H. Lee, O.~Cheong, W.-C. Kwon, S.~Shin, and K.-Y. Chwa, ``Approximation of
  curvature-constrained shortest paths through a sequence of points,'' in
  \emph{Algorithms - ESA 2000}, ser. Lecture Notes in Computer Science,
  M.~Paterson, Ed.\hskip 1em plus 0.5em minus 0.4em\relax Springer Berlin
  Heidelberg, 2000, vol. 1879, pp. 314--325.

\bibitem{Goaoc}
\BIBentryALTinterwordspacing
X.~Goaoc, H.~Kim, and S.~Lazard, ``Bounded-curvature shortest paths through a
  sequence of points using convex optimization,'' \emph{{SIAM} J. Comput.},
  vol.~42, no.~2, pp. 662--684, 2013. [Online]. Available:
  \url{https://doi.org/10.1137/100816079}
\BIBentrySTDinterwordspacing

\bibitem{Manyam2018}
S.~G. Manyam and S.~Rathinam, ``On tightly bounding the dubins traveling
  salesman's optimum,'' \emph{Journal of Dynamic Systems, Measurement, and
  Control}, vol. 140, no.~7, 2018.

\bibitem{Laumond1998}
\BIBentryALTinterwordspacing
J.~P. Laumond, S.~Sekhavat, and F.~Lamiraux, \emph{Guidelines in nonholonomic
  motion planning for mobile robots}.\hskip 1em plus 0.5em minus 0.4em\relax
  Berlin, Heidelberg: Springer Berlin Heidelberg, 1998, pp. 1--53. [Online].
  Available: \url{https://doi.org/10.1007/BFb0036070}
\BIBentrySTDinterwordspacing

\bibitem{pontryagin}
\BIBentryALTinterwordspacing
L.~S. Pontryagin, V.~G. Boltianski, R.~V. Gamkrelidze, E.~F. Mishchenko, and
  D.~E. Brown, ``The mathematical theory of optimal processes,'' London, Paris,
  1964, a Pergamon Press book. [Online]. Available:
  \url{http://opac.inria.fr/record=b1122221}
\BIBentrySTDinterwordspacing

\bibitem{Applegate:2007}
D.~L. Applegate, R.~E. Bixby, V.~Chvatal, and W.~J. Cook, \emph{The Traveling
  Salesman Problem: A Computational Study (Princeton Series in Applied
  Mathematics)}.\hskip 1em plus 0.5em minus 0.4em\relax Princeton, NJ, USA:
  Princeton University Press, 2007.

\bibitem{Laumond1994}
J.~P. Laumond, P.~E. Jacobs, M.~Taix, and R.~M. Murray, ``A motion planner for
  nonholonomic mobile robots,'' \emph{IEEE Transactions on Robotics and
  Automation}, vol.~10, no.~5, pp. 577--593, Oct 1994.

\end{thebibliography}

\section*{Appendix}
\begin{figure}[htb!]
\centering{}
\includegraphics[width=0.6\columnwidth]{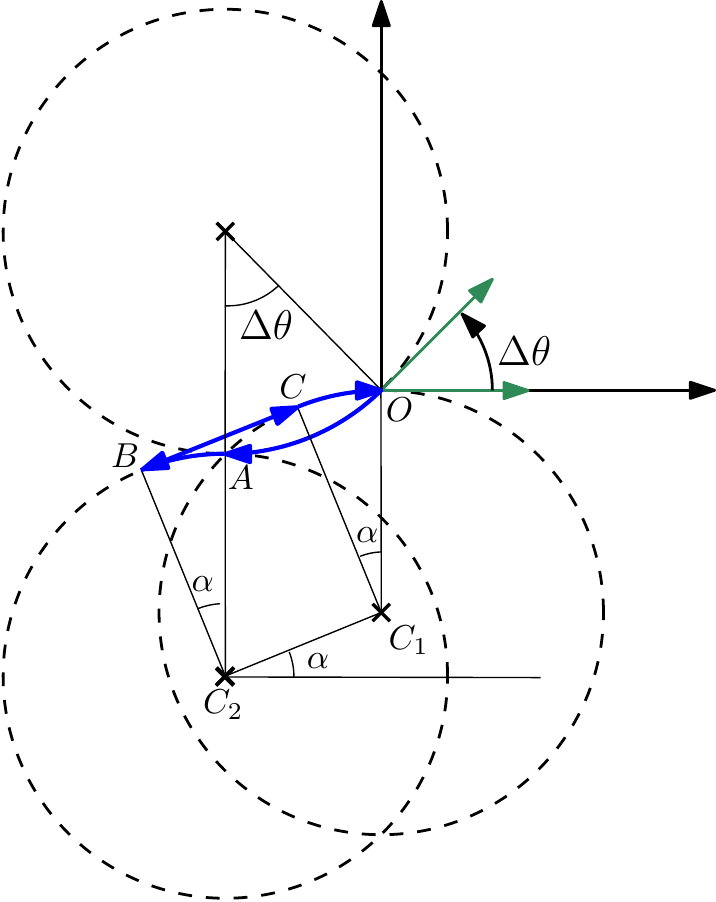}
\caption{Construction of Reeds-Shepp path from $(0,0,\Delta \theta)$ to $(0,0,0)$}
\label{fig:3rhotheta}
\end{figure}
The following lemma is a simplification of a more general bounding result proved for the Reeds-Shepp vehicle in \cite{Laumond1994}.
\begin{lemma}
The length of the Reeds-Shepp path between two oriented points $(0,0,\Delta \theta)$ and $(0,0,0)$ is at most $3\rho \Delta\theta$.
\end{lemma}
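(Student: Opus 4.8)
The plan is to prove the bound constructively: exhibit an explicit \emph{feasible} Reeds--Shepp path between the two configurations and bound its length, rather than trying to characterize the optimum. Since the Reeds--Shepp vehicle is reversible (traversing a path backwards merely swaps forward and backward motion and preserves length), it is equivalent to connect $(0,0,0)$ to $(0,0,\Delta\theta)$, which is what I would actually construct.

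First I would write down a three-arc candidate, all arcs of radius $\rho$ turning in the same rotational sense, with two cusps separating them (a path of the $C|C|C$ family, e.g.\ forward-left, backward-left, forward-left). Parametrizing by the intermediate headings, I let the heading increase monotonically as $0 \to \tfrac{\Delta\theta}{2}-\eta \to \tfrac{\Delta\theta}{2}+\eta \to \Delta\theta$, so that the three arcs subtend $\tfrac{\Delta\theta}{2}-\eta$, $2\eta$, and $\tfrac{\Delta\theta}{2}-\eta$ for a single unknown $\eta\ge 0$. The terminal heading is then automatically $\Delta\theta$, and the only remaining requirement is position closure, i.e.\ that the path returns exactly to the origin.

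To impose closure I would integrate the kinematics \eqref{eq:kinematics} in the convenient complex form $dz = c\,e^{i\theta}\,d\theta$, where $z=x+iy$ and $c=\pm1$ records the product of the velocity and steering signs on each arc, so that each arc contributes $-i\,c\,(e^{iH_j}-e^{iH_{j-1}})$ with $H_j$ the successive headings. Summing the three contributions with $c=(+1,-1,+1)$, the net displacement collapses to a multiple of $\bigl(\sin(\tfrac{\Delta\theta}{2})-2\sin\eta\bigr)$, which vanishes precisely when $\sin\eta=\tfrac12\sin(\tfrac{\Delta\theta}{2})$. I would then verify that this choice yields a genuine path: because $\tfrac12\sin(\tfrac{\Delta\theta}{2})\le \sin(\tfrac{\Delta\theta}{2})$, we get $0\le\eta\le \tfrac{\Delta\theta}{2}$ for $0<\Delta\theta\le\pi$, so all three arc angles are nonnegative, the heading stays monotone, and the total turning $\Delta\theta\le\pi$ respects the validity limitation of the $C|C|C$ family in Table \ref{RSpaths}.

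Finally, each of the three arcs subtends an angle at most $\Delta\theta$ (indeed $\tfrac{\Delta\theta}{2}-\eta\le\Delta\theta$ and $2\eta\le\Delta\theta$), so the path length is at most $3\rho\,\Delta\theta$, which is the claimed bound. The hard part will be the closure step: one must show that some admissible choice of arc angles brings the vehicle back exactly to its starting point while realizing the prescribed heading change, \emph{and} that the resulting angles remain nonnegative and inside the region where the chosen path family is valid. The complex-exponential bookkeeping is what makes this tractable, reducing closure to a single scalar equation in $\eta$; everything else is a routine length estimate.
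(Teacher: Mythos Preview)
Your construction is correct and in fact yields a tighter bound than you state: the three arc angles $\tfrac{\Delta\theta}{2}-\eta$, $2\eta$, $\tfrac{\Delta\theta}{2}-\eta$ sum to exactly $\Delta\theta$, so the length of your $l^+|l^-|l^+$ path is precisely $\rho\,\Delta\theta$, not merely $\le 3\rho\,\Delta\theta$. The paper takes a different constructive route: it builds a four-piece path (an arc of angle $\Delta\theta$, an arc of angle $\Delta\theta/2$, a straight segment, and another arc of angle $\Delta\theta/2$) and computes its length explicitly as $2\rho\,\Delta\theta + 2\rho\sin(\Delta\theta/2)$, then bounds this by $3\rho\,\Delta\theta$ via $\sin x\le x$. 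Your three-arc closure via complex integration is more compact and, once you notice the telescoping sum, strictly sharper; the paper's path is geometrically simpler to visualize but longer. One minor remark: your appeal to the $C|C|C$ row of Table~\ref{RSpaths} is unnecessary and slightly misplaced, since that table characterizes a sufficient family of \emph{optimal} Reeds--Shepp paths, whereas here you only need kinematic feasibility, which any concatenation of radius-$\rho$ arcs with cusps automatically enjoys.
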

\begin{proof}
Consider a feasible path for the Reeds-Shepp vehicle between $(0,0,\Delta \theta)$ and $(0,0,0)$ passing through points $A$,$B$ and $C$ as shown in Fig. \ref{fig:3rhotheta}. The length $dist_{rs} (\Delta \theta)$ of this path is equal to the sum of length of the four segments $OA$,$AB$,$BC$ and $CO$. From Fig. \ref{fig:3rhotheta}, one can deduce that $\alpha = \frac{\Delta\theta}{2}$. The length of the segment joining points $B$ and $C$ is denoted by $\overline{BC}$.
\begin{align*}
\centering
dist_{rs} (\Delta \theta) &= \rho \Delta\theta + \rho \alpha + \overline{BC} + \rho \alpha \\
& = \rho \Delta \theta + 2 \rho \alpha + \sqrt{ (\rho \sin \Delta\theta)^2 +(\rho- \rho \cos \Delta\theta )^2} \\
&= 2 \rho \Delta\theta + 2 \rho \sin \left( \frac{\Delta\theta}{2} \right) \\
& \le 3\rho \Delta\theta.
\end{align*}

\end{proof}
\end{document}